\newtheorem{theorem}{Theorem}
\numberwithin{theorem}{section}
\newtheorem{proposition}[theorem]{Proposition}
\newtheorem{lemma}[theorem]{Lemma}
\newtheorem{corollary}[theorem]{Corollary}
\newtheorem{definition}[theorem]{Definition}
\newtheorem{remark}[theorem]{Remark}
\newtheorem{example}[theorem]{Example}
\newcommand{\RR}{\mathbb{R}}
\newcommand{\R}{\mathbb{R}}
\newcommand{\TPP}{\RR^d \!/\mathbb R {\bf 1}}
\DeclareMathOperator*{\argmax}{arg\,max}
\def\tsc#1{\csdef{#1}{\textsc{\lowercase{#1}}\xspace}}
\begin{document}
\let\WriteBookmarks\relax
\def\floatpagepagefraction{1}
\def\textpagefraction{.001}

\shorttitle{Tropical Support Vector Machines}    

\shortauthors{Yoshida et al.}  

\title [mode = title]{Tropical Support Vector Machines: Evaluations and Extension to Function Spaces}  



%

\author[1]{Ruriko Yoshida}


\tnotetext[0]{RY is partially funded by NSF DMS 1916037. HM is partially funded by JSPS KAKENHI 19H04987, 19H05024 and 19K06957. KM is partially funded by JSPS KAKENHI 18K11485}

\ead{ryoshida@nps.edu}

\ead[url]{polytopes.net}


\affiliation[1]{organization={Department of Operations Research, Naval Postgraduate School},
            city={Monterey},
            postcode={93943}, 
            state={CA},
            country={USA}}
\author[2]{Misaki Takamori} 





\affiliation[2]{organization={Graduate School of Science and Technology, Kwansei Gakuin University},
            city={Sanda},
            postcode={669-1337},
            state={Hyogo},
            country={Japan}}

\author[3]{Hideyuki Matsumoto} 





\affiliation[3]{organization={Graduate School of Medicine, Osaka City University},
            postcode={558-8585}, 
            state={Osaka},
            country={Japan}}

\author[2]{Keiji Miura}[orcid=0000-0002-9258-6541]


\ead{miura@kwansei.ac.jp}


\cormark[1]
\cortext[1]{Corresponding author}



\begin{abstract}
Support Vector Machines (SVMs) are one of the most popular supervised learning models to classify using a hyperplane in an Euclidean space.
Similar to SVMs, tropical SVMs classify data points using a tropical hyperplane under the tropical metric with the max-plus algebra.
In this paper, first we show generalization error bounds of tropical SVMs over the tropical projective torus.
While the generalization error bounds attained via
{\color{black}Vapnik-Chervonenkis (}VC{\color{black})} dimensions in a distribution-free manner still depend on the dimension, we also show {\color{black}numerically and} theoretically by extreme value statistics that the tropical SVMs for classifying data points from two Gaussian distributions as well as empirical data sets of different neuron types are fairly robust against the curse of dimensionality.
Extreme value statistics also underlie the anomalous scaling behaviors of the tropical distance between random vectors with additional noise dimensions.
Finally, we define tropical SVMs over a function space with the tropical metric. 
\end{abstract}


\begin{highlights}
\item We obtained generalization error bounds of tropical Support Vector Machines (SVMs) via {\color{black}the} Vapnik-Chervonenkis dimensions {\color{black} of tropical hyperplanes using Tropical Radon Lemma by Jaggi et al. (2008).} 
\item We demonstrate{\color{black}d} theoretically by extreme value statistics that the tropical SVMs
are robust against the curse of dimensionality
for classifying data points from two Gaussian distributions
as well as experimentally recorded activities from different neuron types.
\item We define{\color{black}d} tropical SVMs over a function space {\color{black} to enable the classification of curves, which is a task we encounter frequently in practice, such as classifications on neuronal tuning curves}.
\end{highlights}

\begin{keywords}
Extreme Value Statistics \sep
Function Spaces \sep
Max-plus Algebra \sep
Supervised Learning \sep
Tropical Geometry
\end{keywords}

\maketitle

\section{Introduction}

In data science, one of the well-known challenges we face is to classify data points with a large number of predictors.  For example, in image processing, in order to discriminate one object, such as missiles or face recognition, from others in images, these images are described as a vector in pixels which are typically in a very high dimensional vector space (\cite{857795}).  In bioinformatics, researchers try to classify particular diseases using high dimensional data sets such as micorarrays or SNPs (\cite{Fan3}).
\cite{Fan3} summarize challenges and difficulties with high dimensionality in classification.

Support Vector Machines (SVMs) are one of well-known supervised learning models to classify data points using a hyperplane and  introduced by
\cite{Boser92atraining} and 
\cite{Cortes95support-vectornetworks}. 
The classical SVMs introduced by
\cite{Boser92atraining} can be written as the $L_2$ norm SVMs, that is a hinge loss plus the $L_2$ norm penalty formulation. 
\cite{10.1016/j.neunet.2009.11.012} showed that including many redundant features can cause difficulties in the performance of $L_2$ norm SVM.
\cite{10.5555/645527.657467} showed that the SVM with the $L_1$ norm penalty instead of the $L_2$ norm penalty works well with variable selection as well as classification at the same time.  This is called {\em $L_1$ norm SVMs}.
{\color{black}
While there are several frameworks to analyze the generalization performance, such as covering numbers \citep{MohriRostamizadehTalwalkar18}, real log canonical thresholds \citep{Hayashi17}, mean-field regime \citep{Nitanda21}, and Langevin dynamics regime \citep{NEURIPS2020_df1a336b}, the SVMs especially allow us to derive the upper bounds for the generalization error via Vapnik-Chervonenkis (VC) dimensions \citep{Vapnik}.}
{\color{black} Although the bounds via VC dimensions may not necessarily be tight, it is still significant to have the first bound for a new variant of SVMs.}
\cite{10.5555/2946645.3053515} showed the generalization bound on the probability of errors of the $L_1$ norm SVMs and it still depends on the number of predictors, i.e., the dimension of the normal vector of the hyperplane.

Because of advances in computational algebraic geometry (\cite{ren}), tropical geometry finds applications in data science.   For example, it can be applied to principal component analysis  {\color{black}(\cite{10.1093/bioinformatics/btaa564,YZZ}) and Bayesian Networks (\cite{Tran}).  With these statistical methodologies, tropical geometry is applied to phylogenomics analyses on Apicomplexa, African coelacanth whole genome data sets, and 1089 full length sequences of hemagglutinin (HA) for
influenza A H3N2 from 1993 to 2017 in the state of New York 
obtained from the GI-SAID EpiFlu  (\cite{10.1093/bioinformatics/btaa564,YZZ}).}
\cite{Gartner} applied tropical geometry to SVMs.
Instead of using a hyperplane defined by the $L_2$ metric,
\cite{Gartner} used a {\rm tropical hyperplane} with the {\em tropical metric} with the {\em min-plus} algebra to classify the data points. 
Same as the $L_2$ norm SVMs, tropical SVMs are the {\em tropicalization} of the $L_2$ norm hard margin SVMs which maximizes the {\em margin}, the distance from the tropical hyperplane to the closest data points in terms of the min-plus algebra.
\cite{Gartner} also showed that a hard margin tropical SVM can be formulated as a linear programming problem. Then,
\cite{tang} showed necessary and sufficient conditions of the optimal solutions of the linear programming problem for finding a hard margin tropical SVM if it is feasible in terms of the max-plus algebra. They also introduced soft margin tropical SVMs 
 and showed that finding a soft margin tropical SVM can be formulated as a linear programming problem.


While there are some developments in computational sides of tropical SVMs, there has not been much in its theoretical evaluation and statistical analysis.
For example, it is unclear how the tropical metric with the max-plus algebra and tropical SVMs handle the curse of dimensionality.
Whether tropical SVMs are robust against the curse of dimensionality is an important question to answer, because we can extend tropical SVMs to various types of data including a function space.
In fact, there are many classification problems with functions as predictors, 
such as neuronal tuning curves (\cite{dayan01}), instead of vectors.

In this paper we focus on hard margin tropical SVMs with the max-plus algebra.  Like the $L_2$ norm and the $L_1$ norm SVMs, we assume that there exists the optimal tropical hyperplane defined by the normal vector $\omega \in \RR^d \!/\mathbb R {\bf 1}$ such that the probability of the loss function being positive equals to zero.    In addition, in order for the tropical metric to be well defined, we consider the \emph{tropical projective torus}, that is, the projective torus
$\mathbb R^d \!/\mathbb R {\bf 1}$, where ${\bf 1}:=(1, 1, \ldots , 1)$ is defined as the all-one vector.  Note that any vector $(v_1, \ldots, v_d) \in \mathbb R^d \!/\mathbb R {\bf 1}$ is equal to $(v_1+c, \ldots, v_d+c)$ with any scalar $c\in \mathbb R$ and $\mathbb R^d \!/\mathbb R {\bf 1}$ is isometric to $\RR^{d - 1}$.
In applications, the tropical projective torus is useful for subtracting the baseline $\RR {\bf 1}$ components from feature vectors, as we will see.

Our main contribution of this paper primarily consists of two items: 
(1) evaluation of generalization errors for tropical SVMs {\color{black}using VC dimensions for tropical hyperplanes} over $\mathbb R^d \!/\mathbb R {\bf 1}$; and 
(2) extension to tropical SVMs on a function space, which consists of the tropical metric and a set of functions from a multi-dimensional vector space to a real number.

While generalization error bounds of tropical SVMs over $\mathbb R^d \!/\mathbb R {\bf 1}$ still depends on the dimension, our simulations with a set of Gaussian distribution functions show that errors rates of tropical SVMs on $\mathbb R^d \!/\mathbb R {\bf 1}$ grow much slower than ones with  $L_2$ norm SVMs when we increase the number of predictors while we fix the sample sizes.  Then we extend a notion of tropical SVMs to a function space with the tropical metric. In fact, we show that a set of all functions from a multi-dimensional vector space to a real number with the tropical metric is a normed vector space.


This manuscript is organized as follows:
In Section \ref{sec:def}, we set up basics in tropical geometry under the max-plus algebra.
In Section \ref{sec:app1}, as an application, we show anomalous scaling behaviors of tropical distances.
In Section \ref{sec:svm}, we set up tropical SVMs with the tropical metric over the tropical projective torus as linear programming problems and show the generalization error bounds via the VC dimension of a tropical hyperplane.
In Section \ref{sec:comp}, as an application, we show theoretically by the extreme value statistics the robustness of the tropical SVMs against the curse of dimensionality in a case of two different multivariate Gaussian distributions and empirical data set of neuron types.
In Section \ref{sec:functionSpace}, we first define the tropical distance over a function space.  Then we show that the tropical distance between functions is metric and we also show that a function space with the tropical metric forms a normed vector space.  In addition, we define
tropical SVMs over a function space with the tropical metric.

\section{Definitions of Tropical Distances and Hyperplanes}\label{sec:def}

In this section, we remind readers some basics in tropical arithmetic and algebra using the max-plus algebra.   
Through this paper, we consider the tropical projective torus $\mathbb R^d \!/\mathbb R {\bf 1}$ which is isometric to $\R^{d-1}$.  For more details, see \cite{ETC,MS,tang}.

\begin{definition}[Tropical Arithmetic Operations]
In this tropical semiring $(\,\mathbb{R} \cup \{-\infty\},\boxplus,\odot)\,$, the basic tropical arithmetic operations of addition and multiplication are defined as:
$$a \boxplus b := \max\{a, b\}, ~~~~ a \odot b := a + b ~~~~\mbox{  where } a, b \in \mathbb{R}\cup\{-\infty\}.$$
Over this semiring, $-\infty$ is the identity element under addition and 0 is the identity element under multiplication.
\end{definition}

\begin{definition}[Tropical Scalar Multiplication and Vector Addition]
For any scalars $a,b \in \mathbb{R}\cup \{-\infty\}$ and for any vectors $v = (v_1, \ldots ,v_d), w= (w_1, \ldots , w_d) \in (\mathbb{R}\cup-\{\infty\})^d$, we define tropical scalar multiplication and tropical vector addition as follows:
$$a \odot v \boxplus b \odot w := (\max\{a+v_1,b+w_1\}, \ldots, \max\{a+v_d,b+w_d\}).$$
\end{definition}

{\color{black}
\begin{definition}
Suppose $S \subset \mathbb R^d \!/\mathbb R {\bf 1}$. If 
\[
a \odot v \boxplus b \odot w \in S
\]
for any $a, b \in \R$ and for any $v, w \in S$, then $S$ is called {\em tropically convex}.
\end{definition}

\begin{definition}[Tropical Convex Hull]\label{def:polytope}
Suppose we have a finite subset $V = \{v^1, \ldots , v^s\}\subset \mathbb R^d \!/\mathbb R {\bf   1}$.  The {\em tropical convex hull} or {\em tropical polytope} of $V$ is the smallest tropically-convex subset containing $V$.   It can be written as the set of all tropical linear combinations of $V$ such that:
$$ \mathrm{tconv}(V) = \{a_1 \odot v^1 \boxplus a_2 \odot v^2 \boxplus \cdots \boxplus a_s \odot v^s \mid  a_1,\ldots,a_s \in \R \}.$$
A tropical polytope of a set of two points $\{v^1, \, v^2\} \subset \mathbb R^d \!/\mathbb R {\bf   1}$ is called a {\em tropical line segment} between two points $v^1, \, v^2$.
\end{definition}

{\color{black}Note that over the tropical projective torus $\mathbb R^d \!/\mathbb R {\bf   1}$, any point $x=(x_1, \ldots , x_d) \in  \mathbb R^d \!/\mathbb R {\bf   1}$ can be also written as
\[
(x_1, \ldots , x_d) = (x_1+c, \ldots , x_d+c)
\]
for any $c \in \mathbb{R}$ by the definition of taking mod by ${\bf   1} := (1, \ldots , 1)$.  Therefore, we can assume that 
\[
(x_1, \ldots , x_d) = (x_1-x_d, \ldots , x_{d-1}-x_d, 0)
\]
for any point $x:=(x_1, \ldots , x_d) \in  \mathbb R^d \!/\mathbb R {\bf   1}$.
Therefore over this manuscript we assume that the last coordinate of any point in the tropical projective torus $\mathbb R^d \!/\mathbb R {\bf   1}$ is equal to $0$ by taking this normalization.}

\begin{figure}[ht!]
\centering
\includegraphics[width=0.29\textwidth]{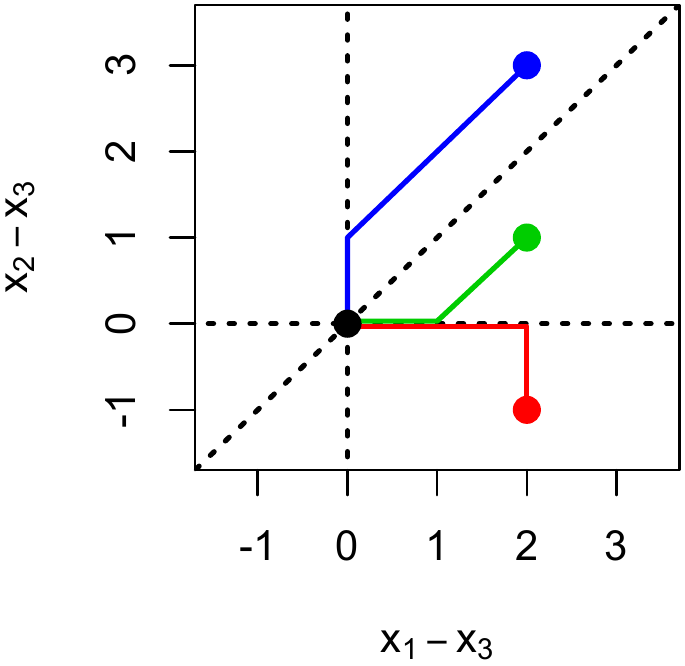}
\caption{{\color{black}
The tropical line segments between the origin and the three example points in $\mathbb R^3 \!/\mathbb R {\bf 1}$ are represented by the blue, green, and red lines.
The tropical distances from the origin to the three example points are $3$ (blue), $2$ (green), and $3$ (red).}}
\label{fig:D_tr}
\end{figure}

\begin{example}[{\color{black}Examples of Tropical Line Segments}]
{\color{black}Suppose we have $(2, 3, 0), \, (2, 1, 0), \, (2, -1, 0) \in \mathbb R^d \!/\mathbb R {\bf   1}$.  Then, the}
tropical line segments between the origin {\color{black} $(0, 0, 0) \in \mathbb R^d \!/\mathbb R {\bf   1}$ and these three points in $\mathbb R^d \!/\mathbb R {\bf   1}$ are drawn} by blue, green, and red lines in Figure \ref{fig:D_tr}.
\end{example}
}

\begin{definition}[Generalized Hilbert Projective Metric]
\label{eq:tropmetric} 
For any $v, \, w \in \mathbb R^d \!/\mathbb R {\bf 1}$ such that $v = (v_1, \ldots , v_d)$ and $w= (w_1, \ldots , w_d)$,  the {\em tropical distance} $d_{\rm tr}$ between them is defined such that:
\begin{equation*}
d_{\rm tr}(v,w)  := \max_{i} \bigl\{ v_i - w_i \bigr\} - \min_{i} \bigl\{ v_i - w_i \bigr\}.
\end{equation*}
\end{definition}

\begin{remark}
The tropical metric $d_{\rm tr}$ is a metric over $\mathbb R^d \!/\mathbb R {\bf 1}$.
\end{remark}

{\color{black}
\begin{example}
The tropical distances from the origin $(0,0,0)$ to the three example points in Figure\ref{fig:D_tr} in $\mathbb R^3 \!/\mathbb R {\bf 1}$ are
\[ d_{\rm tr}((2,3,0), (0,0,0)) = 3-0 = 3, \]
\[ d_{\rm tr}((2,1,0), (0,0,0)) = 2-{\color{black}0 = 2}, \]
\[ d_{\rm tr}((2,-1,0), (0,0,0)) = 2-(-1) = 3.\]
As one of many geodesics (shortest paths) toward the blue (green, red) point, the tropical line segment is denoted by the blue (green, red) line.
Intuitively, a tropical distance is a shortest path length as far as you are allowed to go only parallel to the dotted lines.
\end{example}
}

\begin{definition}[Tropical Hyperplane, \cite{Joswig}]
For any $\omega:=(\omega_1, \ldots, \omega_d)\in \mathbb R^d \!/\mathbb R {\bf 1}$, the {\em tropical hyperplane} defined by $\omega$ is the set of points $x\in \mathbb R^d \!/\mathbb R {\bf 1}$ such that 
{\color{black}
\[
\begin{array}{l}
H_{\omega}: = \Big\{(x_1, \ldots , x_d)\in \mathbb R^d \!/\mathbb R {\bf 1}|\exists i, j \in \{1, \ldots , d\}
\mbox{ such that }\\
\omega_i+x_i = \omega_j+x_j = \max \{\omega_1+x_1, \ldots \omega_d+x_d\}\Big\}.\\
\end{array} 
\]
} In addition, we call $\omega$ the {\em normal vector} of the tropical hyperplane $H_{\omega}$.
\end{definition}

{\color{black}
\begin{remark}
In terms of tropical geometry, $H_{\omega}$ is the solutions of the tropical linear function with unknown $x_1, \ldots , x_d$ such that for a fixed $\omega = (\omega_1, \ldots , \omega_d) \in \mathbb R^d \!/\mathbb R {\bf 1}$,
\begin{eqnarray}\nonumber
\omega_1\odot x_1 \boxplus \ldots \boxplus \omega_d \odot x_d \mbox{ for } x = (x_1, \ldots , x_d) \in \mathbb R^d \!/\mathbb R {\bf 1},
\end{eqnarray}
which is a special case of tropical polynomials. For more details on finding solutions of a tropical polynomial, see \cite{MS}.
\end{remark}

\begin{example}[Tropical Hyperplane in $\mathbb R^3 \!/\mathbb R {\bf 1}$]\label{example:thyperplane}
Suppose we have $\omega = (1, 2, 0) \in \mathbb R^3 \!/\mathbb R {\bf 1}$.  Then
{\color{black}
\[
\begin{array}{rcl}
   H_{\omega}: &=& \Big\{ (x_1, x_2, x_3) \in \mathbb R^3 \!/\mathbb R {\bf 1}|
   1+x_1 = 2+x_2 \geq x_3 \mbox{ or}\\
 && 1+x_1 = x_3 \geq 2+x_2 \mbox{ or }
       2+x_2 = x_3 \geq 1+x_1 \Big\}.\\
\end{array}
\]
}
This means that we have four cases:
\begin{enumerate}
    \item{\bf Case 1: The first term and the second term are equal and max.} $1+x_1 = 2+x_2 > x_3(=0)$.
    \item{\bf Case 2: The first term and the third term are equal and max.} $2+x_2 < 1+x_1 = x_3(=0)$.
    \item{\bf Case 3: The second term and the third term are equal and max.} $1+x_1 < 2+x_2 = x_3(=0)$.
    \item{\bf Case 4:  All terms are equal and max.} $1+x_1 = 2+x_2 = x_3(=0)$.
\end{enumerate}
We can set $x_3 = 0$ without loss of generality,
since $(x_1, x_2, x_3)$ and $(x_1 - x_3, x_2 - x_3, 0)$ represent the same point in $\mathbb R^3 \!/\mathbb R {\bf 1}$.
Thus $H_{(1, 2, 0)}$ is the set of $(x_1, x_2, 0)$ on the three half lines in Figure \ref{fig:thyper::ex}.  
\begin{figure}
    \centering
    \includegraphics[width=0.5\textwidth]{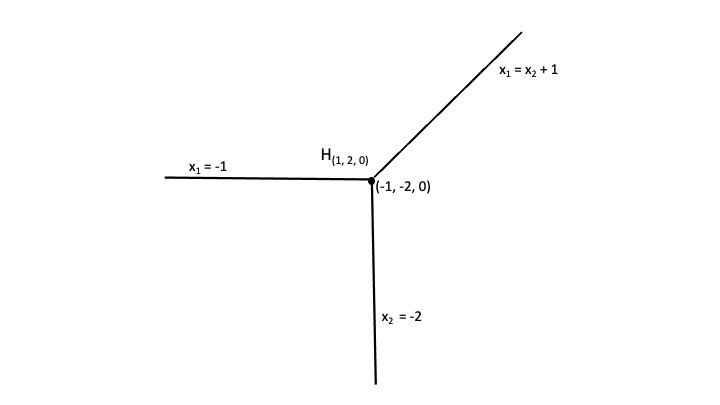}
    \caption{Tropical hyperplane defined by $\omega = (1, 2, 0)$ over $\mathbb R^3 \!/\mathbb R {\bf 1}$.}
    \label{fig:thyper::ex}
\end{figure}

\end{example}

}

\begin{definition}[Sectors of Tropical Hyperplane, \cite{Joswig}]
Each tropical hyperplane $H_{\omega}$ divides the tropical projective torus $\mathbb R^d \!/\mathbb R {\bf 1}$ into $n$ connected components.
These connected components 
are {\em open sectors} defined as:
$$S_{\omega}^i~:=~\{x\in \mathbb R^d \!/\mathbb R {\bf 1}\;|\; \omega_i+x_i>\omega_j+x_j,\;\forall j\neq i\;\},\;\;\!i\!=\!1,\!\ldots\!,d.$$
\end{definition}

\begin{definition}[Tropical Distance to a Tropical Hyperplane]
The {\em tropical distance} $d_{\rm tr}$ from a point $x\in \mathbb R^d \!/\mathbb R {\bf 1}$ to a tropical hyperplane $H_{\omega}$ is:
$$d_{\rm tr}(x, H_\omega)\;:=\;\min\{d_{\rm tr}(x, y)\;|\;y\in H_{\omega}\}.$$
\end{definition}

\begin{proposition}[Lemma 2.1 in \cite{Gartner}]\label{pp:phdis}
Let $H_{\bf 0}$ be the tropical hyperplane defined by the zero vector ${\bf 0} = (0, 0, \ldots , 0) \in {\mathbb R}^d \!/\mathbb R {\bf 1}$.
For any $x=(x_1, \ldots, x_d) \in \mathbb R^d \!/\mathbb R {\bf 1}$,
\[
d_{\rm tr}(x, H_{\bf 0}) = {\rm max}(x) - {\rm second \; max}(x). 
\]
\end{proposition}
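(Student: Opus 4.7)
The plan is to establish the equality by proving two matching inequalities: an upper bound via an explicit construction of a nearby point on $H_{\bf 0}$, and a lower bound by case analysis on where the maximum of an arbitrary $y \in H_{\bf 0}$ is witnessed. Set $M := \max_i x_i$ and $M' := \mathrm{second\ max}_i x_i$, and let $k$ be an index with $x_k = M$. If $M = M'$ then $x$ itself has its max attained at least twice, so $x \in H_{\bf 0}$ and both sides vanish; so I may assume $M > M'$.

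For the upper bound, I would define $y \in \mathbb R^d \!/\mathbb R {\bf 1}$ by $y_i := x_i$ for $i \neq k$ and $y_k := M'$. By construction $\max_i y_i = M'$ and this value is attained at $k$ and at any index realizing the second max of $x$, so $y \in H_{\bf 0}$. A direct computation gives $x_i - y_i = 0$ for $i \neq k$ and $x_k - y_k = M - M' \geq 0$, hence $d_{\rm tr}(x, y) = M - M'$, and therefore $d_{\rm tr}(x, H_{\bf 0}) \leq M - M'$.

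For the lower bound, I would fix any $y \in H_{\bf 0}$ and set $M_y := \max_i y_i$. By the definition of $H_{\bf 0}$ (with $\omega = {\bf 0}$) there are at least two distinct indices $p \neq q$ with $y_p = y_q = M_y$; since at most one of them can equal $k$, at least one index $\ell \neq k$ satisfies $y_\ell = M_y$, and by the choice of $k$ we have $x_\ell \leq M'$. Using $i = k$ and $i = \ell$ as two witnesses in the definition of $d_{\rm tr}$,
\[
d_{\rm tr}(x,y) \;\geq\; (x_k - y_k) - (x_\ell - y_\ell) \;\geq\; (M - M_y) - (M' - M_y) \;=\; M - M',
\]
where the first inequality uses that $\max_i(x_i - y_i) - \min_i(x_i - y_i)$ bounds any such difference, and the estimate on $x_k - y_k$ exploits $y_k \leq M_y = y_\ell$. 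Taking the infimum over $y \in H_{\bf 0}$ yields the matching lower bound.

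I do not anticipate a genuine obstacle here; the only conceptual point is to observe that the defining condition of $H_{\bf 0}$ forces a witness index of $\max y$ that is distinct from $k$, which is precisely what makes the ``$x_\ell \leq M'$'' estimate available and lets one pair of coordinates simultaneously control $\max_i(x_i - y_i)$ from below and $\min_i(x_i - y_i)$ from above.
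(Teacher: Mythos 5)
Your argument is correct and complete: the explicit point $y$ obtained by lowering the unique maximal coordinate of $x$ to the second maximum gives the upper bound, and the observation that any $y\in H_{\bf 0}$ must attain its maximum at some index $\ell\neq k$ (so that $x_\ell\leq M'$) gives the matching lower bound. The paper itself states this result without proof, citing Lemma 2.1 of G\"artner and Jaggi, and your derivation is essentially the standard argument given there, so there is nothing to flag.
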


\begin{corollary}[Corollary  2.3 in  \cite{Gartner}]\label{cry:phdis}
For any $\omega\in {\mathbb R}^d \!/\mathbb R {\bf 1}$ and for any $x\in \mathbb R^d \!/\mathbb R {\bf 1}$, 
\[
d_{\rm tr}(x, H_\omega) = d_{\rm tr}(\omega+x, H_{\bf 0}).
\]
\end{corollary}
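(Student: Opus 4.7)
The plan is to reduce the statement to Proposition~\ref{pp:phdis} by showing two things: (i) the tropical hyperplane $H_\omega$ is the translate of $H_{\bf 0}$ by $-\omega$, and (ii) the tropical distance $d_{\rm tr}$ is invariant under coordinate-wise translation of both arguments by the same vector in $\R^d$. Together these let us push the minimization defining $d_{\rm tr}(x,H_\omega)$ onto $H_{\bf 0}$.

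For step (i), I would start directly from the definition: $y\in H_\omega$ means there exist indices $i\ne j$ with $\omega_i+y_i=\omega_j+y_j=\max_k(\omega_k+y_k)$. Writing $z:=\omega+y$ coordinate-wise, this reads $z_i=z_j=\max_k z_k$, which is exactly $z\in H_{\bf 0}$. Hence the map $y\mapsto \omega+y$ is a bijection $H_\omega\to H_{\bf 0}$, and $H_\omega=\{z-\omega : z\in H_{\bf 0}\}$. (This is well-defined on the torus since adding any multiple of ${\bf 1}$ to $\omega$ or $y$ shifts $\omega+y$ by the same multiple of ${\bf 1}$.)

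For step (ii), translation invariance of $d_{\rm tr}$ is immediate from Definition~\ref{eq:tropmetric}: for any $c\in\R^d$,
\[
(x_i+c_i)-(y_i+c_i)=x_i-y_i\quad\text{for all }i,
\]
so the max and min in the definition are unchanged, giving $d_{\rm tr}(x+c,\,y+c)=d_{\rm tr}(x,y)$. In particular, taking $c=\omega$ yields $d_{\rm tr}(x,y)=d_{\rm tr}(x+\omega,\,y+\omega)$.

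Combining, I would compute
\[
d_{\rm tr}(x,H_\omega)=\min_{y\in H_\omega}d_{\rm tr}(x,y)=\min_{z\in H_{\bf 0}}d_{\rm tr}(x,z-\omega)=\min_{z\in H_{\bf 0}}d_{\rm tr}(x+\omega,z)=d_{\rm tr}(\omega+x,\,H_{\bf 0}),
\]
using step (i) at the second equality, step (ii) with $c=\omega$ at the third, and the definition of distance to a set at the fourth. The main (minor) obstacle is just bookkeeping on the quotient $\R^d/\R{\bf 1}$: one has to verify that the coordinate-wise sum $\omega+x$ and the translation argument descend well to the torus, but both are routine because any ${\bf 1}$-shift contributes equally to every coordinate and therefore cancels out in the differences $v_i-w_i$ appearing in $d_{\rm tr}$.
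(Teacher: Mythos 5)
Your proof is correct. The paper itself states this as Corollary 2.3 of \cite{Gartner} without reproducing a proof, so there is nothing to compare against line by line; your argument --- that $y\mapsto\omega+y$ is a bijection from $H_\omega$ onto $H_{\bf 0}$ (translation equivariance of the hyperplane) combined with the invariance of $d_{\rm tr}$ under simultaneous translation of both arguments --- is the standard and complete way to derive it, and your handling of the quotient by $\R{\bf 1}$ is sound.
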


{\color{black}
\begin{example}\label{example:opensector}
We use the sample $H_{\omega}$ from Example \ref{example:thyperplane}.  Let $d = 3$ and suppose $\omega = (1, 2, 0) \in \mathbb R^3 \!/\mathbb R {\bf 1}$.  
In addition, suppose we have a point $x=(1, 1, 0)\in \mathbb R^3 \!/\mathbb R {\bf 1}$. By Corollary \ref{cry:phdis}, 
$$d_{\rm tr}(x, H_{\bf \omega}) = d_{\rm tr}(x+\omega, H_{\bf 0}) = d_{\rm tr}((2, 3, 0), H_{\bf 0}) =  3 - 2 =1.$$ 
\end{example}}

\begin{figure}[t!]
\centering
\includegraphics[width=0.215\textwidth]{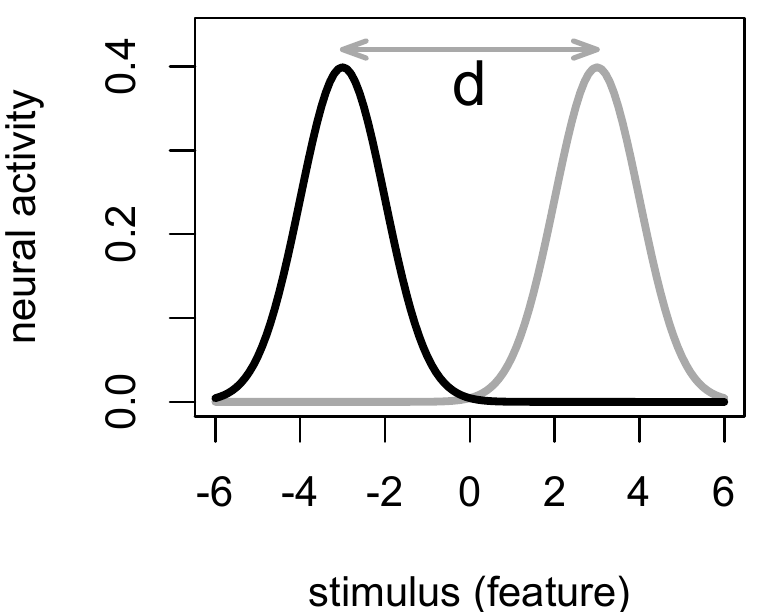} \ ~ 
\includegraphics[width=0.215\textwidth]{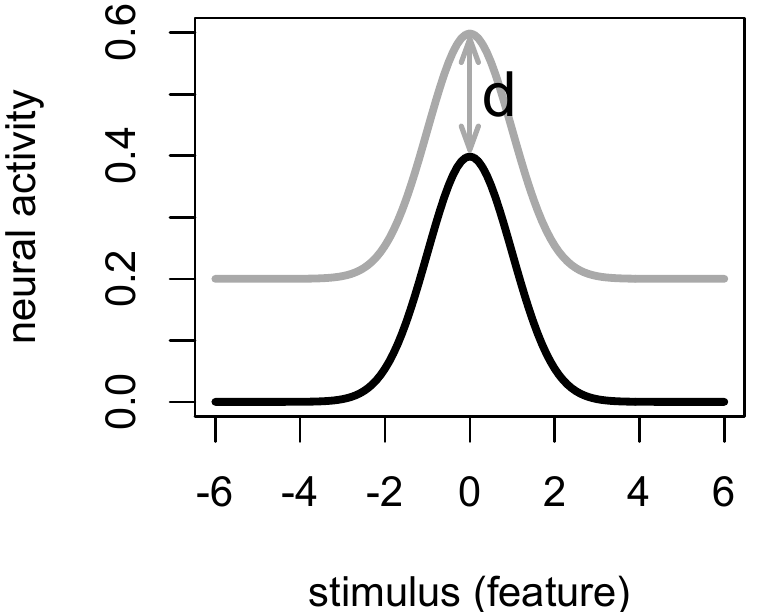} \newline \newline 
\includegraphics[width=0.215\textwidth]{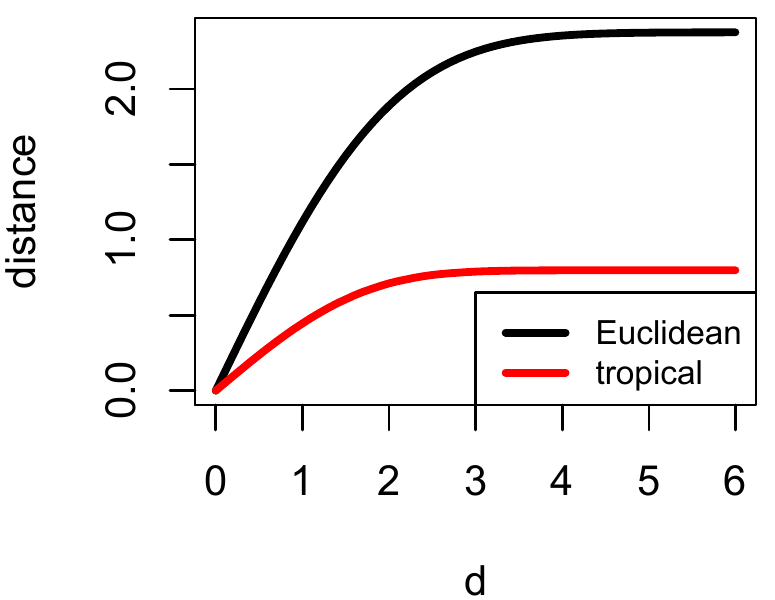} \ ~ 
\includegraphics[width=0.215\textwidth]{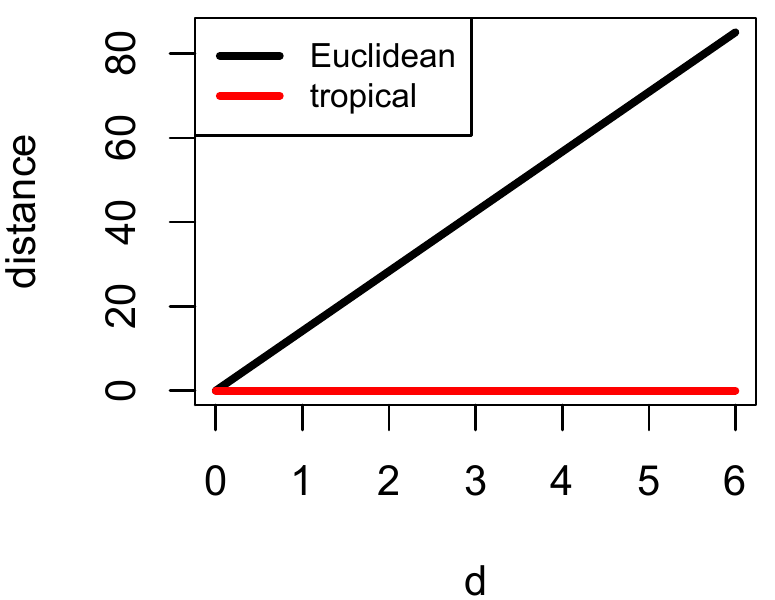} \newline
\caption{Tropical and Euclidean distances between two bell-shaped tuning curves which are horizontally shifted (left column) or vertically lifted (right column). The tropical distance is insensitive to the vertical lift of tuning curves.}
\label{fig:distBellShapedTC}
\end{figure}

\section{Application 1: Valuable Scaling Behaviors of Tropical Distances}\label{sec:app1}

\subsection{Bell-shaped Tuning Curves}
To characterize the properties of the tropical distances $d_{\rm tr}(v,w)$ proposed in this paper,
we computed the tropical distances for the mathematical models of neuronal tuning curves as benchmarks.
The tuning curves are the vectors consisting of neural responses to different stimuli.
For example, the responses of neurons in the primary visual cortex to oriented bars on the screens as visual stimuli are known to show famous bell-shaped curves as a function of the orientation.

We modeled neuronal tuning curves basically as standard Gaussian distribution functions and examined how the distances scale when one of two curves is horizontally shifted or vertically lifted.
Both the tropical and Euclidean distances between two bell-shaped tuning curves increased with the increasing horizontal shift between them (Figure~\ref{fig:distBellShapedTC}, left).
Meanwhile, the tropical distances between the two bell-shaped tuning curves was insensitive to the vertical lift of tuning curves, while the Euclidean distance was not (Figure~\ref{fig:distBellShapedTC}, right).
The insensitivity to the vertical shift originates from the definition of the tropical distances $d_{\rm tr}(v,w)$ where tuning curves are considered in $\mathbb R^d \!/\mathbb R {\bf 1}$ and $(1, 1, \ldots, 1)$-direction is collapsed and ignored in Def.\ref{eq:tropmetric}. That is, $d_{\rm tr}(v+c{\bf 1},w+d{\bf 1}) = d_{\rm tr}(v,w)$ for any $c$ and $d$ in $\mathbb R$.
\begin{figure}[t]
\centering
\includegraphics[width=0.31\textwidth]{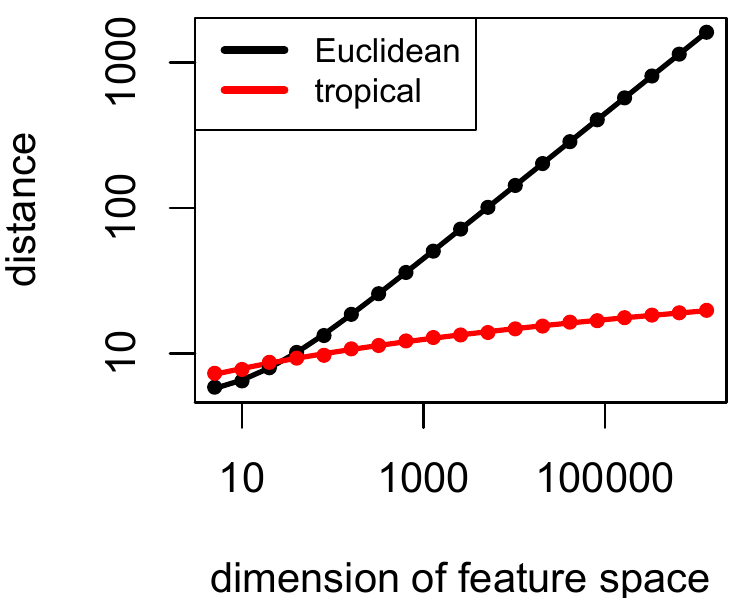}
\caption{Tropical and Euclidean distances between random and flat tuning curves. The tropical distance scales with $\log n$ as a result of extreme value statistics. The circles denote simulation results and the lines denote theoretical results.}
\label{fig:distRandomTC}
\end{figure}
Note that the entire lifting of tuning curves are often observed as a result of drifting background neural activities (\cite{harris15,harris19,miura05,miura06,miura07,miura13,miura19}).
Thus, practically, the tropical distance can be useful for the sake of ignoring the temporal drift of tuning curves.

\subsection{Random Tuning Curves}\label{sec:app1_rand}
Next we considered a random tuning curve consisting of noisy responses. 
When we computed the distances between a random tuning curve
\begin{equation}
v = (5, -x_2, -x_3, \ldots, -x_n) ,
\end{equation}
where $x_i \sim \textrm{Exp(1)}$ and a flat tuning curve,
\begin{equation}
w = (0, 0, 0, \ldots, 0) ,
\end{equation}
the tropical distances was relatively insensitive to the additional noise dimensions.
That is, the tropical distance scaled with $\log n$ while the Euclidean distance scaled with $\sqrt{n}$.
The anomalous scaling for the tropical distance can be explained by the extreme value statistics (\cite{gumbel04}) as,
\begin{eqnarray}
d_{\rm tr}(v,w) &:=& \max_{i} \bigl\{ v_i - w_i \bigr\} - \min_{i} \bigl\{ v_i - w_i \bigr\}\nonumber \\
&\approx& 5 + \max_{i} \bigl\{ x_i \bigr\}.
\end{eqnarray}
Then, because $ \max_{i} \bigl\{ x_i \bigr\} -\log n \sim \textrm{Gumbel(0,1)}$ (\cite{gumbel04}), the expectation is given as
\begin{equation}
\overline{d_{\rm tr}(v,w)} = 5 + \gamma + \log n .
\end{equation}
This theoretical result explains the simulation result in Figure 4.
In this way, the extreme value statistics plays the crucial role in the computation of the tropical distance between random vectors.

\section{Evaluation of Tropical Support Vector Machines (SVMs)}\label{sec:svm}

\subsection{$L_2$-norm SVMs}

In this section we provide an overview of the $L_2$ norm SVMs.  
Let $\mathcal{D}_2$ be the distribution on the joint random variable $(X, Y)$ for $X \in \R^d$ and $Y \in \{-1, 1\}$ and let $\mathcal{S}_2$ be the sample $\mathcal{S}_2:= \{(X^1, Y^1), \ldots , (X^n, Y^n)\}$.
Then the standard linear SVM is a statistical model by solving the following regularization problem:
\[
\min_{\omega}\left(\underbrace{\lambda ||\omega||^2}_\text{regularizer}+\underbrace{n^{-1}\sum_{i = 1}^n (1 - Y^i ((X^i)^T \omega + \omega_0))_+}_\text{error}\right),
\]
where $||x||$ is the $L_2$ norm of a vector $x \in \R^d$, $(1-u)_+ = \max\{1-u, 0\}$ is the hinge loss function, $\lambda$ is a tuning parameter, $\omega_0$ is a constant term of the hyperplane, and  $\omega$ is the normal vector of the hyperplane to separate the data points.
\cite{Vapnik} discussed the generalization bounds on the $L_2$ norm SVMs using Theorem \ref{bound0} and the {\em VC dimension} of the $L_2$ norm SVMs.  In addition, using the Rademacher complexity of $L_2$ norm of unit vectors (\cite{10.5555/944919.944944}), one can show the following generalization bound of the error rate:
\begin{theorem}
Let $w^s$ be an output from the classical hard margin SVM from a sample $\mathcal{S}$.  Here $Y \in \{-1, 1\}$ with the distribution $\pi_+ = P(Y = 1)$ and $\pi_- = P(Y = -1)$.  Let $X \in \RR^d$ be a random variable and $\mathcal{D}_{2}$ be the distribution for a random variable $(X,  Y)$.  Let $||x||$ be the $l_2$ norm of a vector $x \in \RR^d$. Assuming that $||X|| \leq R$ with probability 1 and there exists $\omega^*$ with $P_{\mathcal{D}_2}(Y^i ((X^i)^T \omega + \omega_0) \geq 1) = 1$.
Then, for any $\eta > 0$ with the probability greater than or equal to $1 - \eta$ we have
{\small
\begin{equation}\label{bound2}
P_{\mathcal{D}_2}\left(Y \not = sign(X \cdot \omega^s) \right) \leq \frac{4R||\omega^s||}{\sqrt{n}}+ (1+2R||\omega^s||) \sqrt{\frac{2\log(4||\omega^s||/\eta)}{n}}.
\end{equation}}
\end{theorem}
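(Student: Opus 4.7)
The plan is to derive the bound through the standard Rademacher-complexity route for margin-based classifiers, modified with a discretization (union-bound) argument to accommodate the sample-dependent norm $\|\omega^s\|$ appearing on the right-hand side.

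First I would upper bound the 0--1 classification error by a margin loss. Specifically, consider the ramp loss $\phi(u) := \min\{1, \max\{0, 1-u\}\}$, which is $1$-Lipschitz and satisfies $\mathbb{1}[u \leq 0] \leq \phi(u) \leq \mathbb{1}[u < 1]$. Then
\[
P_{\mathcal{D}_2}\bigl(Y \neq \mathrm{sign}(X \cdot \omega^s)\bigr) \;\leq\; \mathbb{E}_{\mathcal{D}_2}\bigl[\phi\bigl(Y (X \cdot \omega^s)\bigr)\bigr].
\]
The hard-margin assumption $P_{\mathcal{D}_2}(Y(X \cdot \omega + \omega_0) \geq 1) = 1$ for $\omega^*$ guarantees feasibility, so the SVM output $\omega^s$ satisfies $Y^i (X^i \cdot \omega^s) \geq 1$ for every training sample, and therefore the empirical ramp loss vanishes.

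Next I would apply a standard Rademacher complexity bound (Bartlett--Mendelson) for the class of linear functions $\mathcal{F}_B := \{x \mapsto \omega \cdot x : \|\omega\| \leq B\}$. For any fixed $B > 0$, with probability at least $1 - \delta$,
\[
\mathbb{E}_{\mathcal{D}_2}[\phi(Y f(X))] \;\leq\; \tfrac{1}{n}\sum_{i=1}^n \phi(Y^i f(X^i)) + 2\mathfrak{R}_n(\phi \circ \mathcal{F}_B) + \sqrt{\tfrac{\log(1/\delta)}{2n}}
\]
uniformly over $f \in \mathcal{F}_B$. By the Talagrand contraction inequality (since $\phi$ is $1$-Lipschitz) and the classical bound $\mathfrak{R}_n(\mathcal{F}_B) \leq BR/\sqrt{n}$ under $\|X\| \leq R$, the Rademacher complexity term is at most $2BR/\sqrt{n}$.

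The main obstacle is that the bound must hold with $B = \|\omega^s\|$, which is \emph{sample-dependent}, whereas the Rademacher bound only holds for a fixed $B$ chosen in advance. To overcome this, I would use a peeling/union-bound argument: apply the bound simultaneously for each $B_k = 2^k$ with failure probability $\delta_k = \eta / (2(k+1)^2)$ (or a similar summable schedule), so that $\sum_k \delta_k \leq \eta$. For the sample value $\|\omega^s\|$, one locates the smallest $k$ with $B_k \geq \|\omega^s\|$, which gives $B_k \leq 2\|\omega^s\|$ (accounting for the factor $2$ in front of $R\|\omega^s\|$) and contributes a $\log(4\|\omega^s\|/\eta)$ factor inside the square root. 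Combining the Rademacher term $\leq 4R\|\omega^s\|/\sqrt{n}$ with the concentration term $(1 + 2R\|\omega^s\|)\sqrt{2\log(4\|\omega^s\|/\eta)/n}$ (where the $(1+2R\|\omega^s\|)$ factor arises from the bounded-difference constant in McDiarmid's inequality for the ramp loss composed with $\mathcal{F}_{B_k}$) yields the stated bound.
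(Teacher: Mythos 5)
Your proposal follows exactly the route the paper itself indicates: the theorem is stated without proof and attributed to the Rademacher complexity of bounded-norm linear classes (Bartlett--Mendelson), which is precisely the ramp-loss surrogate, contraction, $BR/\sqrt{n}$ complexity bound, and union-bound-over-norm-scales argument you sketch. The only loose end is constant bookkeeping: the $(1+2R\|\omega^s\|)$ factor you attribute to McDiarmid arises naturally only if you run the concentration step with the unclipped hinge loss (whose range on the ball of radius $B_k$ is $[0,\,1+B_kR]$) rather than the $[0,1]$-valued ramp loss you start from, but either choice yields a bound at least as strong as the one stated, so the argument goes through.
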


There has been much work done in tighter generalization bounds on the $L_2$ norm SVMs using the {\em VC dimension}.
As it is not this paper's focus to discuss the details on the generalization bounds for the $L_2$ norm SVMs, for more details, see \cite{Burges,Guermeur} and the references therein.

\begin{remark}
Recall that the $L_p$ norm SVMs, for $0 \leq p \leq \infty$, can be written as the following optimization problem:
\[
\min_{\omega}\left(\underbrace{\lambda ||\omega||_p^2}_\text{regularizer}+\underbrace{n^{-1}\sum_{i = 1}^n (1 - Y^i ((X^i)^T \omega + \omega_0))_+}_\text{error}\right),
\]
where $||x||_p$ is the $L_1$ norm of a vector $x \in R^d$, $(1-u)_+ = \max\{1-u, 0\}$ is the hinge loss function, $\lambda$ is a tuning parameter, and $\omega$ is the normal vector of the hyperplane to separate the data points.
\end{remark}

\subsection{Tropical SVMs}

In this section we provide an overview the hard margin tropical SVMs {\color{black} originally developed by \cite{Gartner}} with the random variable $X \in \mathbb R^d \!/\mathbb R {\bf 1}$ given the response variable $Y \in \{0, 1\}^d$ {\color{black}
for one-hot encoding. Note that tropical SVMs can perform multiclass classification naturally}.
Before we formally define the hard margin tropical SVMs, we need to define some notation.

Let $S(x)$ be a set of indices of nonzero elements in  a vector $x \in \mathbb R^d \!/\mathbb R {\bf 1}$, i.e., $S(x) \subset \{1, \ldots , d\}$ where $x_i \not = 0$.
Let $I_{\omega}(x) \in \{0, 1\}^d$ be a vector of indicator functions of index set $\{1, \ldots , d\}$ of a vector $x$ with a vector $\omega \in \RR^d \!/\mathbb R {\bf 1}$ where 
\[
I_{\omega}(x)_i = \begin{cases}
1 & \text{if } x_i + \omega_i = \max(x + \omega)\\
0 & \text{otherwise.}
\end{cases}
\]
Let $J_{\omega}{\color{black}(x)} \in \{0, 1\}^d$ be also a vector of indicator functions of index set $\{1, \ldots , d\}$ of a vector $x$
 such that 
\[
J_{\omega}(x)_i = \begin{cases}
1 & \text{if } x_i + \omega_i = \text{second max}(x + \omega)\\
0 & \text{otherwise.}
\end{cases}
\]

{\color{black}
Suppose we have a categorical response variable $Y' \in \{g_1, \ldots , g_q\}$ with $q \leq d$ many levels.  Then we set a vector of indicator functions $Y = (Y_1, \ldots , Y_d)$ such that
\[
Y_i = \begin{cases}
1 & \mbox{if } Y' = g_i\\
0 & \mbox{else.}
\end{cases}
\]

\begin{example}
Suppose that we have a binary categorical response variable $Y' \in \{\mbox{"yes" , "no"}\}$.  Then we can set as 
a vector of indicator functions $Y = (Y_1, \ldots , Y_d)$ such that
\[
Y_1 = \begin{cases}
1 & \mbox{if } Y' = \mbox{yes}\\
0 & \mbox{else,}
\end{cases}
\]
\[
Y_2 = \begin{cases}
1 & \mbox{if } Y' = \mbox{no}\\
0 & \mbox{else,}
\end{cases}
\]
and
\[
Y_3 = Y_4 = \ldots = Y_d = 0.
\]
\end{example}

In order to simplify the problem, here we consider a case that there are only two classes $Y^A$ and $Y^B$ in the response variable.
More precisely, we have a random variable $Y^A := (Y^A_1, \ldots, Y^A_d), Y^B := (Y^B_1, \ldots, Y^B_d) \in \{0, 1\}^d$ such that
for fixed $i, k \in \{1, \ldots , d\}$ with $ i \neq k$,
\[
Y^A_j = \begin{cases}
1 &\mbox{if } j = i\\
0 &\mbox{else,}
\end{cases}
\]
and
\[
Y^B_l = \begin{cases}
1 &\mbox{if } l = k\\
0 &\mbox{else}
\end{cases}
\]
with a discrete probability $\pi_A := P(Y = Y^A)$ and $\pi_B := P(Y = Y^B)$
}.
Then, suppose we have a multivariate random variable $X \in \mathbb R^d \!/\mathbb R {\bf 1}$ given $Y$ with the probability density function
{\color{black}
$f_A$ if $Y = Y^A$
}
and the probability density function
{\color{black}
$f_B$ if $Y = Y^B$
}
such that there exists a tropical hyperplane $H_{\omega^*}$ with a normal vector $\omega^* \in \mathbb R^d \!/\mathbb R {\bf 1}$ with the following properties:
\begin{itemize}
\item[(i)] there exists an index $i\in \{1, \ldots, d\}$ such that
\[ \text{for any}\; j\in \{1, \ldots, d\}\backslash\{ i\},\;\; \omega^*_{i}+X_{i}\;>\;\omega^*_{j}+X_{j}, \;\;\;\text{and}\]
\item[(ii)] 
$$\max \left\{I_{\omega^*}(X) - Y \right\} = 0,$$
\end{itemize}
with probability $1$.

Let $\mathcal{D}$ be the distribution on the joint random variable $(X, Y)$ and let $\mathcal{S}$ be the sample $\mathcal{S}:= \{(X^1, Y^1), \ldots , (X^n, Y^n)\}$.
Then we formulate an optimization problem for solving the normal vector $\omega$ of an optimal tropical separating hyperplane $H_{\omega}$ for random variables $X$ given $Y$: For some cost $C \in \RR$
{\tiny
\begin{equation}\label{equation:24}
\begin{matrix}
\displaystyle  \left[
 \max \limits_{\omega \in \RR^d \!/\RR {\bf 1}}\; \min \limits_{X \in \mathcal{S},  i \in S(I_{\omega}(X)), j \in S(J_{\omega}(X))}\left\{
\underbrace{\left(X_i+\omega_{i}-X_{j}-\omega_{j}\right)}_\text{margin}+ \underbrace{ \frac{C}{n} \sum_{k=1}^n\min\left\{Y^k - I_{\omega}(X^k) \right\}}_\text{error}\right\}\right]. \\
\end{matrix}
 \end{equation}}
Here, the expectation of the random variable $\max \left\{I_{\omega}(X) - Y \right\}$ is the $0-1$ loss function.  
Also note that
$$d_{\rm tr}(X, H_\omega)\;=\;X_{i}+\omega_{i}-X_{j}-\omega_{j},$$ where $i \in S(I_{\omega}(X)), j \in S(J_{\omega}(X))$.
Thus, this optimization  problem can be explicitly written as a linear programming problem \eqref{equation:251}--\eqref{equation:254} below, where the optimal solution $z$ means the {\em margin} of the tropical SVM: For some cost $C \in \RR$
{\scriptsize
    \begin{align}
  & \max \limits_{(z, \omega) \in \mathbb{R} \times \mathbb R^d \!/\mathbb R {\bf 1}} \; \left(z + \frac{C}{n} \sum_{k=1}^n \min \left\{Y^k - I_{\omega}(X^k) \right\}\right) \label{equation:251} \\
  \textrm{s.t.}\;\; \forall X &\in \mathcal{S},  \forall i \in S(I_{\omega}(X)), \forall j \in S(J_{\omega}(X)) ,  \;\;z+\textcolor{black}{X_{j}}+\omega_{j}\textcolor{black}{-X_{i}}-\omega_{i}\leq 0,  \label{equation:252}\\
  \forall X &\in \mathcal{S}, \forall i \in S(I_{\omega}(X)), \forall  j \in S(J_{\omega}(X)) , \;\; \omega_{j}-\omega_{i}\leq X_{i}-X_{j}, \label{equation:253} \\
  \forall X &\in \mathcal{S}, \forall l \not \in  S(I_{\omega}(X))\cup S(J_{\omega}(X)), j \in S(J_{\omega}(X)) \;\; \omega_{l}-\omega_{j}\leq X_{j}-X_{l}. \label{equation:254}
  \end{align}
}
\begin{remark}
With the regularization with the tropical metric to the tropical hyperplane, it minimizes the coefficients of the tropical hyperplane $\omega_i$ for $i = 1, \ldots d$ and tries to minimize the difference between the coordinate of the maximum and second maximum of $X + \omega$.  This means that this regularization behaves similar to the $L_1$ or $L_{\infty}$ norm regularization, that is, to select features which contribute to discriminate
{\color{black}
$X|Y^A$
} and
{\color{black}
$X|Y^B$
}.
\end{remark}

\subsection{Bound on Generalization Errors via VC Dimensions}\label{sec:main} 
In this section we show the generalization bound for the error rate of the hard margin tropical SVM. In order to compute them, we define the loss function in terms of the distribution of $X|Y$ defined in the previous section and the sample loss function. In order to prove the generalization bound, we use the {\color{black} VC dimension \citep{vapnik95,MohriRostamizadehTalwalkar18}}  

Let $\mathcal{D}$ be the distribution of $X|Y$ defined in Section \ref{sec:svm}. In addition, let
\[
L_{\mathcal{D}}(\omega) =  \mathbb{E}_{\mathcal{D}}\left(\max \left\{ I_{\omega}(X) - Y \right\}\right)
\]
be the loss function with respect to $\mathcal{D}$ and let
\[
L_{\mathcal{S}}(\omega) =
\mathbb{E}_{\mathcal{S}}\left(\max \left\{ I_{\omega}(X) - Y \right\}\right)
\]
be the loss function with respect to the sample $\mathcal{S}$.

Let $(Z_s, \omega^s)$ be an output of the tropical SVM. Then we want to find the upper bound for 
\begin{equation}\label{prob1}
P\left(\max \left\{I_{\omega^s}(X)-Y \right\} > 0 \right) = L_{\mathcal{D}}(\omega^s).
\end{equation}

In order to prove our bound, we have to define {\em VC dimension}. 
\begin{definition}
Suppose $M$ is a set and a family of a subset $R \subset 2^M$.  A set $T \subset M$ is called {\em shattered} by $R$ if 
\[
|\left\{r \cap T| r \in R\right\}| = 2^{|T|}. 
\]
The pair $(M, R)$ is called {\em range space}.
The {\em VC dimension} of a range space, $VC-dim(M,R)$ is the maximal cardinality of a subset of $M$ which can be shattered by $R$.
\end{definition}

{\color{black}
\begin{example}[Linear Classifiers]
In $(d-1)$-dimensional Euclidean space, a linear classifier can shatter or successfully classify $d$ points in general positions with arbitrary labels.
Thus, the VC dimension of the linear classifier is $d$.
\end{example}

\begin{lemma}[Tropical Radon Lemma in \cite{Jaggi_newresults}]\label{lm:Radon}
For $d+1$ points in $\mathbb R^d \!/\mathbb R {\bf 1}$, there exists a partition into two sets with intersecting convex hulls, which are therefore not separable by any tropical hyperplane $H_{\omega}$.
\end{lemma}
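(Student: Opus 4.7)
The plan is to establish the Radon partition by constructing an explicit tropical linear dependence and then translating the resulting common hull point into a witness of non-separability. I will exploit that $\mathbb R^d \!/\mathbb R \mathbf{1}$ is $(d-1)$-dimensional, so $d+1$ tropical vectors with $d$ coordinates must be tropically dependent.

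First, I would normalize $v^1, \ldots, v^{d+1}$ via the quotient by $\mathbb R \mathbf{1}$ so each has last coordinate $0$, and assemble them as columns of a $d \times (d+1)$ tropical matrix $V$. I would then invoke a tropical Cramer-type argument: adjoining a generic row to $V$ yields a $(d+1) \times (d+1)$ matrix whose tropical determinant is attained by at least two permutations (this is the definition of tropical singularity, and it is forced here by the column count together with a generic choice of the augmenting row). From the ``argmax'' data of these two permutations, a standard combinatorial procedure produces a partition $\{1, \ldots, d+1\} = A \sqcup B$ with both parts nonempty and scalars $c_1, \ldots, c_{d+1} \in \mathbb R$ such that
\[
\boxplus_{k \in A}\, c_k \odot v^k \;=\; \boxplus_{k \in B}\, c_k \odot v^k \;=:\; p.
\]

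Second, this identity gives $p \in \mathrm{tconv}(\{v^k : k \in A\})$ and simultaneously $p \in \mathrm{tconv}(\{v^k : k \in B\})$ by Definition \ref{def:polytope}, so the two tropical convex hulls share the point $p$. Finally, to convert hull intersection into non-separability, I would argue by contradiction: if some tropical hyperplane $H_\omega$ separated the generator sets $\{v^k : k \in A\}$ and $\{v^k : k \in B\}$, then since the open sectors $S^i_\omega$ of $H_\omega$ are tropically convex, each tropical convex hull would be forced to sit inside a union of sectors disjoint from those containing the other hull. But both hulls contain $p$, which cannot simultaneously lie in disjoint sector-unions.

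The main obstacle is the first step: producing the two-sided tropical dependence with an honest Radon partition. The max-plus setting lacks additive inverses, so one cannot simply split a classical dependence $\sum \alpha_k v^k = 0$ into positive and negative parts as in the usual proof of Radon's theorem. I would handle this by a generic perturbation reducing to the case in which every $d \times d$ tropical minor of $V$ has a unique optimal permutation; then tracking the two distinct permutations that jointly attain the determinant of the augmented matrix yields the partition and the scalars directly, following the approach in \cite{Jaggi_newresults}. Once this combinatorial core is in place, the rest of the argument (membership in the hulls and the sector-convexity contradiction) is routine.
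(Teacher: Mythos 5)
First, a point of reference: the paper does not prove this lemma at all --- it is imported verbatim from \cite{Jaggi_newresults} --- so there is no in-paper argument to compare against. Your overall architecture (produce a Gondran--Minoux-type two-sided tropical dependence $\boxplus_{k\in A}c_k\odot v^k=\boxplus_{k\in B}c_k\odot v^k$, observe that the common value $p$ lies in both tropical convex hulls, then use tropical convexity to rule out separation) is indeed the standard route. But the crux, your first step, is wrong as written. Adjoining a \emph{generic} row $t=(t_1,\ldots,t_{d+1})$ to the $d\times(d+1)$ matrix $V$ yields a square matrix whose tropical determinant equals $\max_k\bigl(t_k+D_k\bigr)$, where $D_k$ is the tropical determinant of $V$ with column $k$ deleted; for generic $t$ (and generic $V$) this maximum is attained by a \emph{unique} permutation, i.e.\ the augmented matrix is tropically nonsingular. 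Nothing is ``forced by the column count'' --- the augmented matrix is square. The device that actually forces singularity is duplicating a row of $V$: for each row $i$, the $(d+1)\times(d+1)$ matrix obtained by appending a second copy of row $i$ is singular (compose an optimal permutation with the transposition of the two equal rows), and expanding along the appended row shows that $\max_k\bigl(D_k+V_{ik}\bigr)$ is attained at least twice. Even granting this, you are not done: this gives, for each coordinate $i$ \emph{separately}, a pair of attaining indices, whereas the Radon partition requires a single split $\{1,\ldots,d+1\}=A\sqcup B$ such that in \emph{every} coordinate the maximum is attained on both sides. That global consistency is supplied by the permutation-parity (sign) bookkeeping of the Gondran--Minoux argument, which your proposal compresses into ``a standard combinatorial procedure'' and ultimately defers back to \cite{Jaggi_newresults} --- circular for a lemma attributed to that very reference.

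A secondary issue is your final step. A \emph{union} of open sectors is not tropically convex even though each individual sector is, so ``each tropical convex hull would be forced to sit inside a union of sectors'' does not follow from sector convexity. You should either restrict attention to the single-sector-versus-complement classifiers that are actually relevant to Lemma \ref{lm:vc} (one can check directly that both an open sector $S^i_\omega$ and its complement are tropically convex, so the hulls of $A$ and $B$ would lie in disjoint tropically convex sets and could not share $p$), or argue directly from $p=\boxplus_{k\in A}c_k\odot v^k$ that $\max_i(\omega_i+p_i)$ must be attained at an index whose sector contains a point of $A$, and likewise for $B$, contradicting a strict separation. As it stands, the proposal has the right skeleton but leaves the two load-bearing steps --- the existence of the two-sided dependence and the passage from hull intersection to non-separability --- unproved.
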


\begin{example}
Any tropical hyperplane in $\RR^3 \!/\mathbb R {\bf 1}$ cannot shatter the four points shown in Figure \ref{fig:vc}.
The fact that the two line segments intersect to each other suggests that it is impossible to classify them by tropical hyperplanes.
In fact, ANY four points cannot be shattered by a tropical hyperplane in $\RR^3 \!/\mathbb R {\bf 1}$.
This is because, given four or more points, there always exist ways of partitioning so that the tropical convex hull of the points in each class intersects.
\label{ex:convexhull}
\end{example}

\begin{figure}[t!]
\centering
\includegraphics[width=0.27\textwidth]{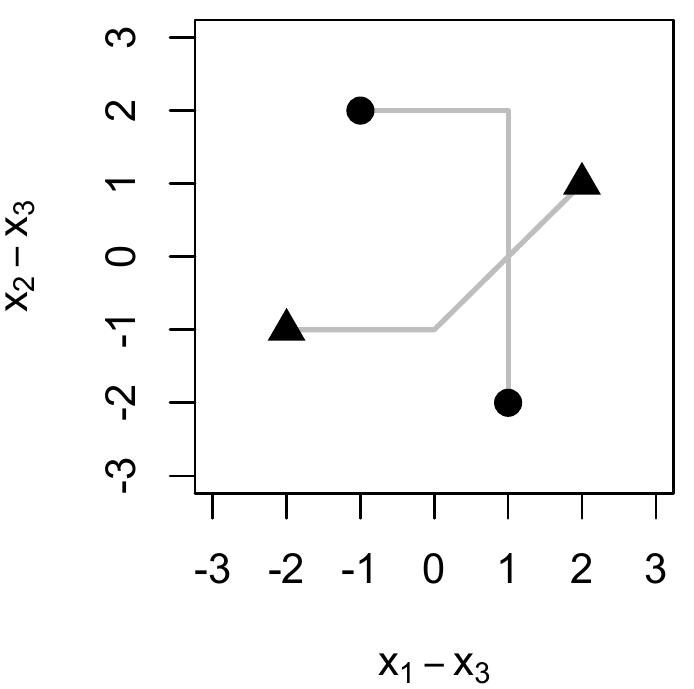}
\caption{{\color{black}
Example positions where four points cannot be shattered by tropical hyperplanes in $\RR^3 \!/\mathbb R {\bf 1}$. The gray lines represent the tropical convex hulls of the two points in the same class.
The fact that the line segments intersect suggests that it is impossible to classify them by tropical hyperplanes.}}
\label{fig:vc}
\end{figure}

\begin{lemma}[Lemma 27 in \cite{Jaggi_newresults}]\label{lm:vc}
There exists a set of $d$ points in $\mathbb R^d \!/\mathbb R {\bf 1}$ that can be shattered by single sectors defined by a tropical hyperplane $H_{\omega}$.
\end{lemma}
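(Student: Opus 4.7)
The plan is to exhibit an explicit family of $d$ points and verify directly that every subset $T \subseteq \{1,\dots,d\}$ arises as $S_\omega^k \cap \{p_1,\dots,p_d\}$ for an appropriate pair $(\omega,k)$. A natural choice is $p_i := [e_i] \in \mathbb R^d/\mathbb R\mathbf{1}$, where $e_i$ is the $i$-th standard basis vector, so that $(p_i)_j = \delta_{ij}$ before passing to the quotient.

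Fix $T$ and proceed in two cases. If $T = \emptyset$, set $k := 1$ and $\omega := (-M,0,\dots,0)$ with $M > 1$. For every $i$, the coordinate $\omega_1 + (p_i)_1 \in \{-M{+}1,\,-M\}$ is strictly dominated by some other $\omega_j + (p_i)_j \in \{0,1\}$ with $j \neq 1$, so no $p_i$ lies in $S_\omega^1$. If $T \neq \emptyset$, pick any $k \in T$ and define
\[
\omega_k := M, \qquad \omega_j := 0 \text{ for } j \in T\setminus\{k\}, \qquad \omega_j := M - \tfrac{1}{2} \text{ for } j \notin T,
\]
with $M \geq 2$. The verification reduces to locating the argmax of $\omega + p_i$ in three cases. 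For $i = k$, the $k$-th coordinate equals $M+1$ and all others are bounded by $M - \tfrac12$, so $p_k \in S_\omega^k$. For $i \in T \setminus \{k\}$, the $k$-th coordinate equals $M$, which strictly exceeds $\omega_i + 1 = 1$, all $\omega_j = 0$ for $j \in T\setminus\{i,k\}$, and all $\omega_j = M - \tfrac12$ for $j \notin T$; thus $p_i \in S_\omega^k$. For $i \notin T$, the $i$-th coordinate of $\omega + p_i$ equals $M + \tfrac12$, strictly exceeding the $k$-th coordinate $M$, so $p_i \in S_\omega^i$ and $p_i \notin S_\omega^k$. Collecting the three cases yields $S_\omega^k \cap \{p_1,\dots,p_d\} = \{p_i : i \in T\}$, which is precisely shattering.

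The main obstacle is purely bookkeeping: the four levels $0,\, M{-}\tfrac12,\, M,\, M{+}\tfrac12$ must be calibrated so that every required strict inequality holds simultaneously and no unwanted tie slips into the argmax. Once the scale is chosen correctly, the argument invokes nothing beyond the definition of an open sector and the normalization of representatives in $\mathbb R^d/\mathbb R\mathbf{1}$, so the whole proof reduces to inspecting $O(d)$ linear inequalities for each of the $2^d$ subsets $T$. A small conceptual point worth flagging in the writeup is that no $p_i$ is ever forced onto $H_\omega$ itself in any of the constructions, since we engineered every coordinate comparison to be strict, which is what sector membership requires.
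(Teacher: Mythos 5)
Your construction is correct: taking $p_i=[e_i]$ and checking, for each subset $T$, that the four-level choice of $\omega$ puts exactly $\{p_i: i\in T\}$ into the chosen open sector is a complete and valid proof of the lemma. Note that the paper itself offers no proof here --- it simply cites Lemma 27 of Jaggi et al.\ and remarks that the statement is essentially trivial --- so your explicit verification is a legitimate self-contained substitute rather than a deviation from an argument in the text. All the strict inequalities you need do hold: in the nonempty case the unique argmax of $\omega+p_i$ sits at $k$ for $i\in T$ (values $M+1$ or $M$ against competitors $M-\tfrac12$, $1$, $0$, using $M\ge 2$) and at $i$ for $i\notin T$ (value $M+\tfrac12$ against $M$), and in the empty case the first coordinate is dominated for every $p_i$. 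The one inaccuracy is your closing remark that no $p_i$ ever lands on $H_\omega$: in the $T=\emptyset$ construction with $d\ge 3$, the point $p_1$ maps to $(-M+1,0,\dots,0)$, whose maximum is attained at all of the last $d-1$ coordinates, so $p_1$ does lie on the hyperplane. This is harmless --- a point on $H_\omega$ belongs to no open sector, which is exactly what the empty subset requires --- but the sentence as written should be deleted or restricted to the $T\neq\emptyset$ case.
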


While Tropical Radon Lemma \ref{lm:Radon} shows that the VC dimension of a tropical hyperplane is smaller than $d+1$,
(trivial) Lemma \ref{lm:vc} shows that it is larger than or equal to $d$
(as single sectors should have less ability than hyperplanes).

\begin{theorem}[\cite{Jaggi_newresults}]\label{thm:vc}
Let $\mathbf{H}$ be a family of tropical hyperplanes in $\RR^d \!/\mathbb R {\bf 1}$.  Then the VC dimension of a range space $(\TPP , \mathbf{H})$ is $d$.
\end{theorem}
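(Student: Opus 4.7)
The plan is to establish the theorem by proving the two matching inequalities $\mathrm{VC\text{-}dim}(\mathbb{R}^d/\mathbb{R}\mathbf{1},\mathbf{H}) \geq d$ and $\mathrm{VC\text{-}dim}(\mathbb{R}^d/\mathbb{R}\mathbf{1},\mathbf{H}) \leq d$ separately. The two lemmas stated just before the theorem are precisely tailored to these two sides, so the proof will be a clean packaging of what is already on the page, in the same spirit as the classical argument that half-spaces in $\mathbb{R}^d$ have VC dimension $d+1$ via the Euclidean Radon theorem.

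For the lower bound, I would invoke Lemma \ref{lm:vc} directly: it exhibits a configuration of $d$ points in $\mathbb{R}^d/\mathbb{R}\mathbf{1}$ that can be shattered by single sectors $S_\omega^i$ of tropical hyperplanes $H_\omega$. Since a single sector is a particular region cut out by some $H_\omega \in \mathbf{H}$, this configuration is a fortiori shattered by the range space $(\mathbb{R}^d/\mathbb{R}\mathbf{1},\mathbf{H})$, which yields $\mathrm{VC\text{-}dim} \geq d$.

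For the upper bound, I would argue by contradiction: suppose some set $T = \{x^1, \ldots, x^{d+1}\}$ of $d{+}1$ points is shattered. Shattering by $\mathbf{H}$ means that for every partition $T = A \sqcup B$ there exists a hyperplane $H_\omega$ with distinct sector indices $i \neq j$ such that $A \subset S_\omega^i$ and $B \subset S_\omega^j$. By Tropical Radon (Lemma \ref{lm:Radon}), however, there exists a particular partition $(A,B)$ of $T$ for which $\mathrm{tconv}(A) \cap \mathrm{tconv}(B) \neq \emptyset$. Pick any $p$ in this intersection. Because every open sector $S_\omega^i$ is tropically convex (a short check: if $x,y \in S_\omega^i$ and $z = a \odot x \boxplus b \odot y$, then the coordinate attaining the maximum in $\omega_i + z_i$ strictly exceeds all $\omega_k + z_k$ for $k \neq i$, since each of the candidates on the right inherits a strict inequality from $x$ or $y$), the inclusion $A \subset S_\omega^i$ would force $\mathrm{tconv}(A) \subset S_\omega^i$ and similarly $\mathrm{tconv}(B) \subset S_\omega^j$, which is incompatible with $S_\omega^i \cap S_\omega^j = \emptyset$ when $i \neq j$. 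This contradiction shows $\mathrm{VC\text{-}dim} \leq d$, and combining with the lower bound finishes the proof.

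The main obstacle I anticipate is not the geometric core of the argument, which is essentially a one-line application of each lemma, but rather making precise the notion of \emph{shattering} for a range space defined by tropical hyperplanes, since each $H_\omega$ naturally partitions $\mathbb{R}^d/\mathbb{R}\mathbf{1}$ into $d$ sectors rather than two half-spaces. One has to commit to the convention that a subset $A \subset T$ is ``picked out'' by $H_\omega$ precisely when $A$ and $T \setminus A$ lie in two distinct open sectors of $H_\omega$, which is exactly the notion of separability referenced by Lemma \ref{lm:Radon}. Once this definitional alignment is in place, the small lemma that open sectors are tropically convex is the only technical ingredient beyond citing the two preceding lemmas.
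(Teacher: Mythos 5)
Your proposal follows essentially the same route as the paper, which likewise obtains the lower bound from Lemma \ref{lm:vc} and the upper bound from the Tropical Radon Lemma \ref{lm:Radon} (both imported from \cite{Jaggi_newresults}) and combines them into the two-sided estimate $d \le \mathrm{VC\mbox{-}dim} \le d$. Your added check that open sectors are tropically convex, and your explicit fixing of the shattering convention for a hyperplane that cuts the torus into $d$ sectors rather than two half-spaces, merely make precise details the paper leaves implicit.
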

}

\begin{theorem}[\cite{Vapnik}]\label{bound0}
Let $\mathcal{H}$ be a family of classification models
{\color{black}whose VC dimension is $h$}.
Let $n$ be the sample size of the sample $\mathcal{S}$. Then, for any $\eta > 0$ and for a sufficiently large sample size $n$ with the probability greater than or equal to $1 - \eta$, we have 
\[
\ell_{\mathcal{D}'}(\mathcal{H}) - \ell_{\mathcal{S}}(\mathcal{H}) \leq  \sqrt{\frac{h (\log(2n/h) + 1) - \log({\color{black}\eta/4})}{n}},
\]
where $\ell_{\mathcal{D}'}(\mathcal{H})$ is the $0-1$ loss for the distribution $\mathcal{D}'$ with $\mathcal{H}$ and $\ell_{\mathcal{S}}(\mathcal{H})$ is the $0-1$ loss for the sample $\mathcal{S}$ with $\mathcal{H}$.
\end{theorem}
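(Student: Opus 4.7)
\medskip

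\noindent\textbf{Proof proposal.} The plan is to follow the classical Vapnik--Chervonenkis uniform convergence argument, since the statement is a distribution-free bound on the deviation between the empirical and true $0$--$1$ risk over an arbitrary hypothesis class of finite VC dimension $h$. The three ingredients are (i) bounding the number of effective dichotomies the class can produce on a finite sample, (ii) a symmetrization step that replaces the unknown distribution $\mathcal{D}'$ by a second i.i.d.\ sample, and (iii) a union bound combined with Hoeffding's inequality on each fixed dichotomy.

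First I would invoke the Sauer--Shelah lemma: if $\mathcal{H}$ has VC dimension $h$, then for every $n$ the growth function satisfies $\Pi_{\mathcal{H}}(n) \leq \sum_{i=0}^{h}\binom{n}{i} \leq (en/h)^h$ once $n\geq h$. This controls how many distinct labelings of a sample of size $n$ the class $\mathcal{H}$ can realize, turning the supremum over the (possibly uncountable) class into a supremum over polynomially many events. Next I would perform the standard symmetrization (or ``ghost sample'') trick: introducing a second i.i.d.\ sample $\mathcal{S}'$ of size $n$, one shows
\[
\Pr\!\left(\sup_{h\in\mathcal{H}}\bigl(\ell_{\mathcal{D}'}(h)-\ell_{\mathcal{S}}(h)\bigr)>\varepsilon\right)\leq 2\,\Pr\!\left(\sup_{h\in\mathcal{H}}\bigl(\ell_{\mathcal{S}'}(h)-\ell_{\mathcal{S}}(h)\bigr)>\varepsilon/2\right),
\]
valid for $n\geq 2/\varepsilon^2$, so the deviation on the unknown distribution is controlled by the deviation between two empirical averages.

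Second, conditional on the combined sample $\mathcal{S}\cup\mathcal{S}'$ of size $2n$, the number of distinct restrictions of hypotheses in $\mathcal{H}$ is at most $\Pi_{\mathcal{H}}(2n)\leq (2en/h)^h$ by Sauer--Shelah. I would then apply a Rademacher/permutation argument to randomize the assignment of points between $\mathcal{S}$ and $\mathcal{S}'$, and for each fixed dichotomy use Hoeffding's inequality (the losses are in $[0,1]$) to get a subgaussian tail of the form $2\exp(-n\varepsilon^2/8)$. A union bound over the $\Pi_{\mathcal{H}}(2n)$ dichotomies yields
\[
\Pr\!\left(\sup_{h\in\mathcal{H}}\bigl(\ell_{\mathcal{D}'}(h)-\ell_{\mathcal{S}}(h)\bigr)>\varepsilon\right)\leq 4\,(2en/h)^h\exp(-n\varepsilon^2/8).
\]

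Finally, I would set this upper bound equal to $\eta$ and solve for $\varepsilon$. Taking logarithms gives $n\varepsilon^2/8 \geq h(\log(2n/h)+1) - \log(\eta/4)$, and solving for $\varepsilon$ reproduces the bound in the theorem (up to universal constants absorbed into the ``sufficiently large $n$'' hypothesis). The main obstacle here is bookkeeping: getting the exact numerical constants inside the square root to match the stated form requires careful tracking of the factors of $2$ and $e$ coming out of symmetrization, Hoeffding, and Sauer--Shelah, and requires invoking $n\geq h$ so that the $(en/h)^h$ bound is valid. Since this is a classical Vapnik inequality, I would ultimately reference the standard presentation in \cite{Vapnik} or \cite{MohriRostamizadehTalwalkar18} for the precise constants, and keep the body of the proof to the structural outline above.
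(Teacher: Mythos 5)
The paper does not prove this theorem at all --- it is imported verbatim from \cite{Vapnik} as a black box --- and your outline is precisely the classical Sauer--Shelah / ghost-sample symmetrization / union-bound-plus-Hoeffding derivation that underlies it, so the approach is correct and is the natural one. The only caveat is the one you already flag: with the constants you carry (a tail of $2\exp(-n\varepsilon^2/8)$ per dichotomy and a prefactor $4(2en/h)^h$), solving for $\varepsilon$ gives $\varepsilon=\sqrt{8\bigl(h(\log(2n/h)+1)-\log(\eta/4)\bigr)/n}$, i.e.\ a bound weaker by a factor of $\sqrt{8}$ than the one stated, and recovering the exact stated form requires Vapnik's sharper basic inequality rather than plain Hoeffding --- so your deferral to \cite{Vapnik} for the precise constants is doing genuine work, not mere bookkeeping.
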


{\color{black}
The generalization error bound for tropical SVMs is obtained by plugging in the VC dimension of tropical hyperplanes in Theorem \ref{thm:vc} for $h$ in Theorem \ref{bound0}.
}

\begin{theorem}
Let $n$ be the sample size of the sample $\mathcal{S}$. Then, for any $\eta > 0$ and for a sufficiently large sample size $n$ with the probability greater than or equal to $1 - \eta$, we have 
{\small
\begin{equation}\label{bound1}
P_{\mathcal{D}}\left(\max \left\{I_{\omega^s}(X) - Y \right\} > 0 \right) \leq  \sqrt{\frac{d (\log(2n/d) + 1) - \log({\color{black}\eta/4})}{n}}.
\end{equation}}
\end{theorem}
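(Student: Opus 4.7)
The plan is to observe that the inequality is a direct instantiation of Theorem \ref{bound0} once we identify the hypothesis class and its VC dimension. First I would set $\mathcal{H}$ to be the family of classifiers $x\mapsto I_{\omega}(x)$ induced by tropical hyperplanes $H_{\omega}$ with $\omega\in\mathbb R^d\!/\mathbb R\mathbf 1$, where an observation $(X,Y)$ is classified correctly when the unique maximizing index of $X+\omega$ matches the one-hot label $Y$. Under this identification the random quantity $\max\{I_{\omega^s}(X)-Y\}$ is exactly the $0/1$ indicator that $X$ lies in the wrong open sector of $H_{\omega^s}$, so
\[
P_{\mathcal D}\bigl(\max\{I_{\omega^s}(X)-Y\}>0\bigr)=\ell_{\mathcal D}(\omega^s)\quad\text{and}\quad L_{\mathcal S}(\omega^s)=\ell_{\mathcal S}(\omega^s),
\]
tying the probability on the left-hand side of \eqref{bound1} to the $0/1$ population loss $\ell_{\mathcal D'}(\mathcal H)$ appearing in Theorem \ref{bound0}.

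Next I would invoke Theorem \ref{thm:vc} to set $h=d$: Tropical Radon's Lemma (Lemma \ref{lm:Radon}) supplies the upper bound $h\le d$ because any $d+1$ points admit a partition whose tropical convex hulls intersect and hence cannot be separated by a tropical hyperplane, while Lemma \ref{lm:vc} supplies the matching lower bound $h\ge d$ through an explicit shattered configuration of $d$ points. Because $\mathcal H$ is precisely the family of tropical hyperplanes in $\mathbb R^d\!/\mathbb R\mathbf 1$, this is the hypothesis class whose VC dimension is to be plugged into the bound.

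Finally, I would apply Theorem \ref{bound0} with $h=d$. Under the hard-margin assumption $L_{\mathcal D}(\omega^*)=0$ together with the empirical feasibility that yields $L_{\mathcal S}(\omega^s)=0$, the left-hand side of the generalization inequality reduces to $\ell_{\mathcal D}(\omega^s)$ itself, so that substituting the VC dimension into the right-hand side produces
\[
P_{\mathcal D}\bigl(\max\{I_{\omega^s}(X)-Y\}>0\bigr)\le\sqrt{\frac{d(\log(2n/d)+1)-\log(\eta/4)}{n}}
\]
with probability at least $1-\eta$, which is exactly \eqref{bound1}.

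The only delicate point is the first paragraph, namely checking that classification by an output of the tropical SVM linear program can be cast in the range-space language of Theorem \ref{bound0} and that the event $\{\max\{I_{\omega^s}(X)-Y\}>0\}$ coincides with the misclassification event for the tropical hyperplane $H_{\omega^s}$; once that identification is made, the theorem follows by plugging the VC dimension from Theorem \ref{thm:vc} into Theorem \ref{bound0}, so I expect no further technical obstacle.
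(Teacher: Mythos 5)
Your proposal is correct and follows essentially the same route as the paper: identify the hypothesis class as tropical hyperplane classifiers, plug the VC dimension $h=d$ from Theorem \ref{thm:vc} into Theorem \ref{bound0}, and use the hard-margin feasibility to set $L_{\mathcal{S}}(\omega^s)=0$ so that the population loss alone is bounded. The extra care you take in the first paragraph to identify $\max\{I_{\omega^s}(X)-Y\}>0$ with the misclassification event is a detail the paper passes over silently, but it does not change the argument.
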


\begin{proof}
Let $\mathcal{S}$ be a sample for a test set.  Since there is a tropical hard margin and $(z_s, \omega^s)$ is a feasible solution on the test set $\mathcal{S}$, $L_{\mathcal{S}}(\omega^s) = 0$.  
With Theorem \ref{thm:vc} and Theorem \ref{bound0}, we have
\[
L_{\mathcal{D}}(\omega^s) - L_{\mathcal{S}}(\omega^s) \leq \sqrt{\frac{d (\log(2n/d) + 1) - \log({\color{black}\eta/4})}{n}}.
\]
Since $L_{\mathcal{S}}(\omega^s) = 0$ by the assumption and 
\[
L_{\mathcal{D}}(\omega^s) = P_{\mathcal{D}}\left(\max \left\{I_{\omega^s}(X) - Y \right\} > 0 \right),
\]
 we have the result.  
\end{proof}


\section{Application 2: Robustness of Tropical SVMs against Curse of Dimensionality}\label{sec:comp}
In the previous section, we derive the generalization error bound for tropical SVMs.
The bound is stated in a distribution-free manner as a general feature applicable to all data.
Specifically, the bound is derived purely combinatorially only based on the shapes of the hyperplanes.
Algorithmic details of tropical SVMs are not yet taken into account there.
Therefore it is possible that tropical SVMs outperform in specific situations, for example, when only a few features are prominently informative among many.
Thus the evaluation of the performance of tropical SVMs by computational experiments can give complementary information.
Here we applied tropical SVMs to specific problems of computational experiments and real data analyses to evaluate its performance originating from the max-plus algebra.

\subsection{Implementation of Tropical SVMs}
Soft margin tropical SVMs are defined in \cite{tang} which allow some data points in the wrong open sectors with the hinge loss function as similar to the $L_2$ norm SVMs. Since the loss function deals with $L_0$ norm of $I_{\omega}(X) - Y$, we have to run exponentially many linear programming problems in order to find the optimal tropical hyperplane to fit the data set.  Therefore, Tang et al.~introduced several heuristic algorithms to estimate the optimal tropical hyperplane to fit the data: Algorithms 1 -- 4.
In the computational experiments in this paper, we applied Algorithms 3, the simplest one, from \cite{tang} to fit each data to the model, as it always performed the best for our simple examples.
We obtained the {\tt R} code from \url{https://github.com/HoujieWang/Tropical-SVM}.

\subsection{Computational Experiments}
As we observe in Section \ref{sec:app1_rand}, the tropical distance can ignore the nuisance dimensions that are purely noises.
Therefore it is expected that tropical SVMs can show a good performance in the curse-of-dimensionality regime where extra dimensions are all uninformative.

Here we performed a simple computational experiment of the two-class classification using the uncorrelated unit-variance Gaussian distributions whose means are
${\color{black}m_1 =}$ $(s, -s, 0, 0, \ldots, 0)$ for class $1$ and ${\color{black}m_2 =}$ $(-s, s, 0, 0, \ldots, 0)$ for class $2$, where $s=\sqrt{2}$ or $5$.
{\color{black}
To be specific, the unit covariance matrices are common for the two distributions as $\Sigma_1 = \Sigma_2 =
\begin{pmatrix}
1 & 0 & \hdots & 0\\
0 & 1 & \hdots & 0\\
\vdots & \vdots & \vdots & \vdots\\
0 & 0 & \hdots & 1\\
\end{pmatrix}$.
}
Note that the signal or the difference of the means{\color{black}, $m_2 - m_1 = (2s, -2s, 0, 0, \ldots, 0)$, is} orthogonal to the irrelevant direction $(1, 1, 1, 1, \ldots, 1)$.
{\color{black}As a measure of separation, the d-prime or the Euclidean distance between $m_1$ and $m_2$ divided by the S.D. (=1) is given by $d' = 2s\sqrt{2}$.}

{\color{black}
Before we look at the curse-of-dimensionality regime or small samples, we first consider the case when the numbers of training and test samples $N$ are fairly large
to compare the numerically obtained generalization errors with the theoretical bounds in Figure \ref{fig:VCbound}.
We draw the lower bound of the classification success rates by subtracting
the penalty term in the right hand side of the inequality in Theorem \ref{bound0}
from the hit rate for the training set.
The gaps between the numerical computations and the theoretical bounds suggest that the bounds are not so tight.
In fact, the lower bounds are mostly smaller than the chance level $50\%$ and thus not informative.
Although we solely use $\eta = 0.1$, the bounds are rather insensitive to the value of $\eta$.
Even if we have as large as $N=1000$ or $10000$ observations, the gap is still quite noticeable as expected from
the penalty term in the right hand side of the inequality in Theorem \ref{bound0}.

\begin{figure}[t!]
\centering
\includegraphics[width=0.215\textwidth]{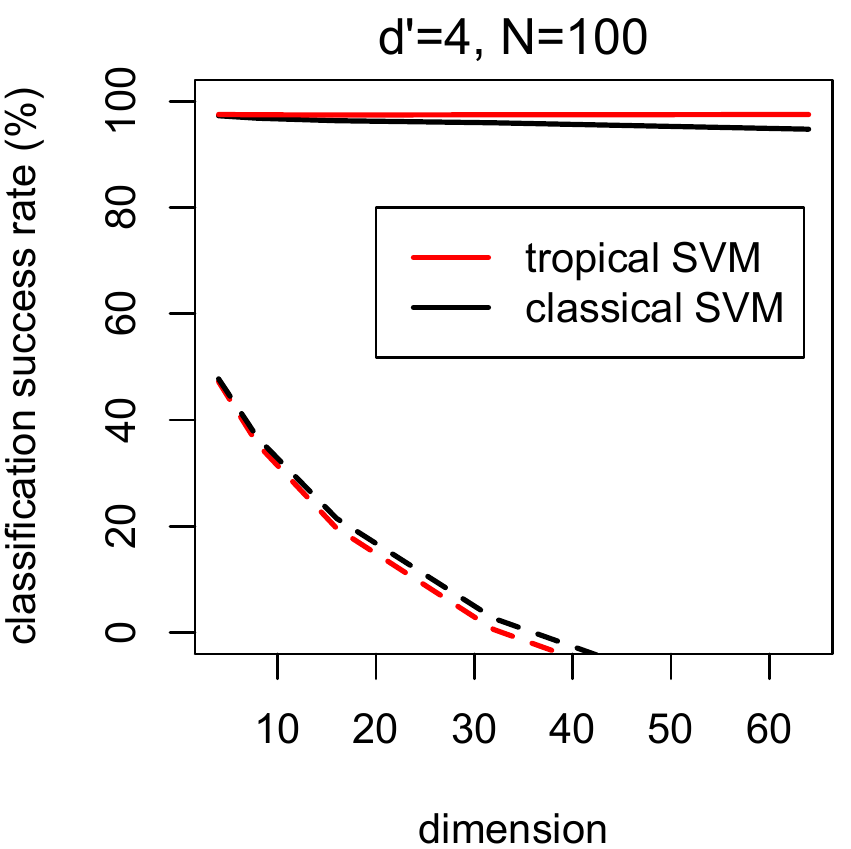} \ ~ 
\includegraphics[width=0.215\textwidth]{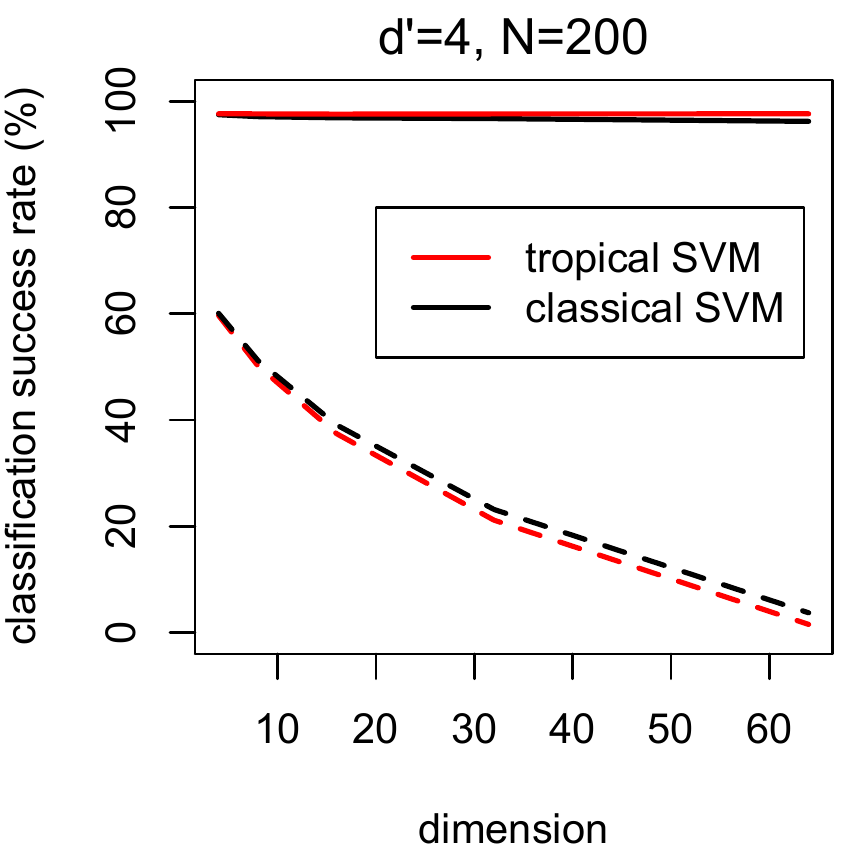}
\caption{{\color{black}
Numerically obtained classification success rates (thick lines) and theoretical lower bounds (dotted lines) for the classification of two multivariate Gaussian distributions
{\color{black}whose means are separated by $d'=4$ and covariances are the unit matrices.}
The results for tropical SVMs are red colored while those for classical SVMs are black colored. The numbers of training and test samples for each class are $100$ (left column) and $200$ (right column). The cross-validated performance averaged over $1000$ realizations of data were plotted. Note that the lower bounds of the hit rates (=upper bounds of error rates) are not tight.}}
\label{fig:VCbound}
\end{figure}

Next, we consider small samples as a curse-of-dimensionality regime.}
Figure \ref{fig:tropSVMgaussian} shows the classification success rates of two multivariate Gaussian distributions separated by ${\color{black}d'= }4$ (top) or ${\color{black}d'= }10\sqrt{2}$ (bottom).
The numbers of training and test samples for each class are 5 (left) and 10 (right).
In all four cases, the hit rate decreases significantly for classical SVMs, while it decreases only slightly for tropical SVMs.
In fact, it almost stays constant for the ${\color{black}d'=}10\sqrt{2}$ cases.
Thus the tropical SVM is robust against the curse of dimensionality in the case for separating two Gaussian distributions.

\begin{figure}[t!]
\centering
\includegraphics[width=0.215\textwidth]{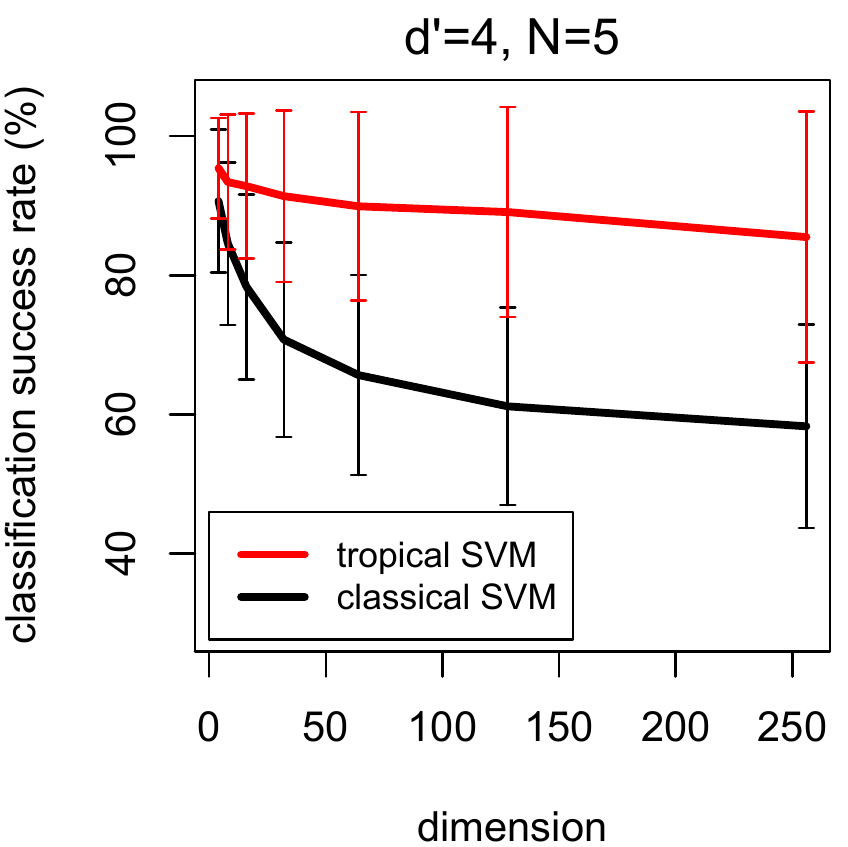} \ ~ 
\includegraphics[width=0.215\textwidth]{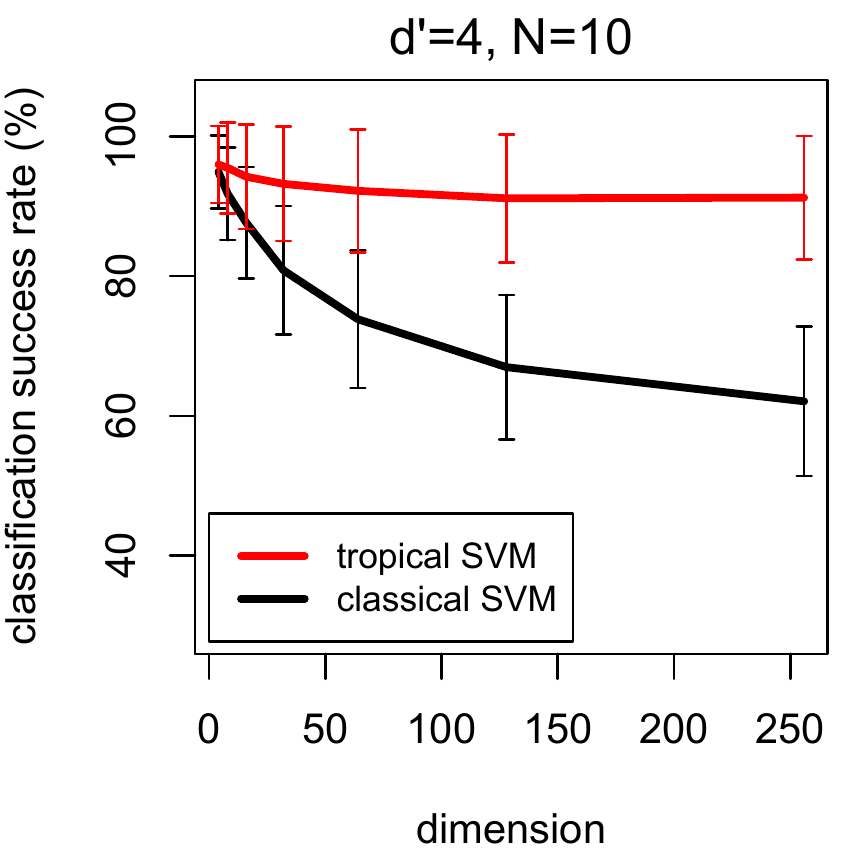} \newline \newline
\includegraphics[width=0.215\textwidth]{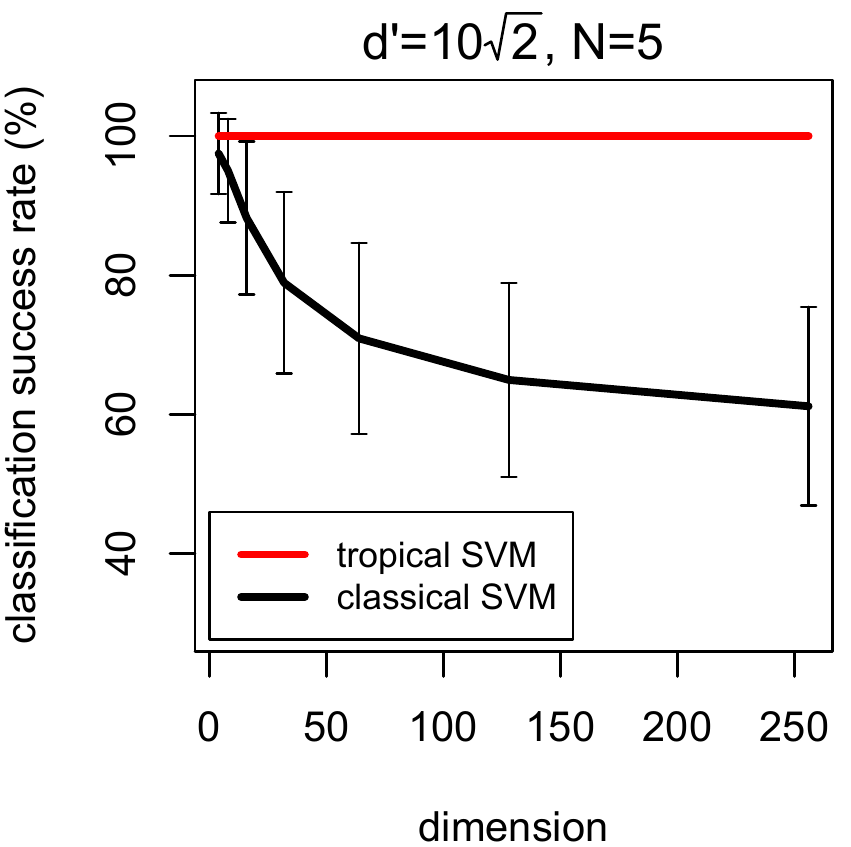} \ ~ 
\includegraphics[width=0.215\textwidth]{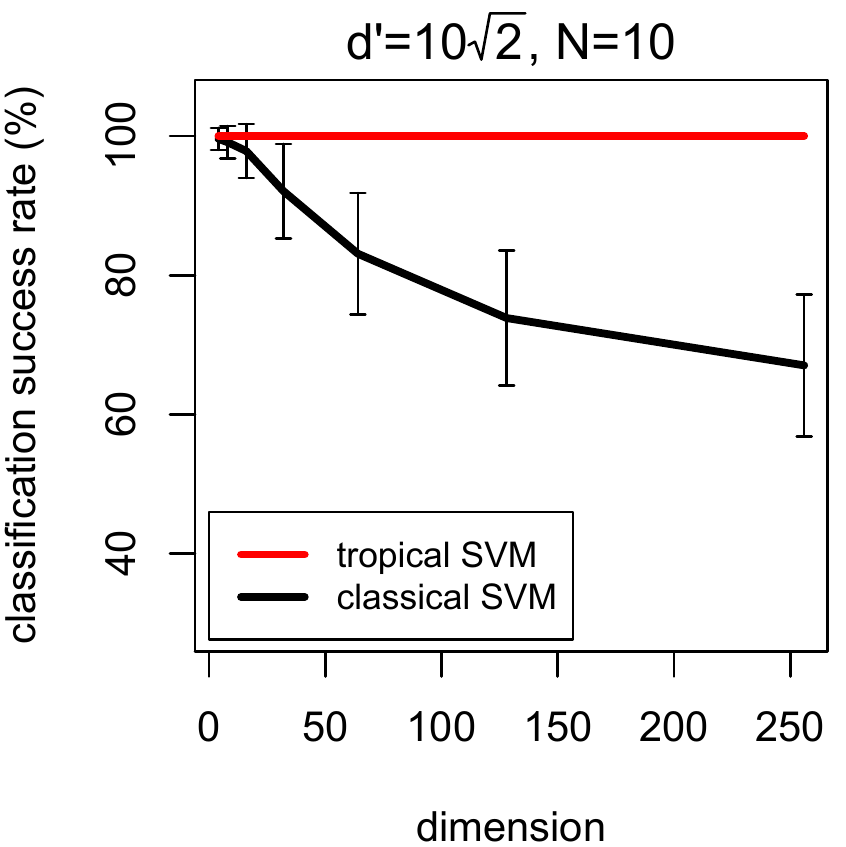} 
\caption{Classification success rates of two multivariate Gaussian distributions 
{\color{black}whose means are separated by $d'=4$ (top row) or $10\sqrt{2}$ (bottom row)
and covariances are the unit matrices.}
The numbers of training and test samples for each class are $5$ (left column) and $10$ (right column). The cross-validated performance averaged over $1000$ realizations of data were plotted.
{\color{black}The errorbars denote the standard deviations for the $1000$ realizations.}
Note that the tropical SVM is robust against the curse of dimensionality.}
\label{fig:tropSVMgaussian}
\end{figure}

As our setting of classifying two Gaussians was so simple, it is expected that the results can be quite general.
{\color{black} The only assumption here is that the difference of the two means is not parallel to ${\bf 1}=(1,1,\ldots,1)$.
We believe that in real data the difference of the means rarely becomes by chance parallel to ${\bf 1}$.}
We will examine the robustness against the curse of dimensionality for the real data 
in what follows.
But before doing that, let us explain theoretically why tropical SVMs are robust against the curse of dimensionality.

\subsection{{\color{black}Understanding} Robustness {\color{black}by Numerical Examples}}\label{sec:TheoreticalExplanation}
First, as a basic example, let us consider the simplest case with two points in three dimensions: $x_1=(5, -5, 0)$ for class $1$ and $x_2=(-5, 5, 0)$ for class $2$.
As our problem is to find $\omega=(\omega_1, \omega_2, \omega_3)$ in $\mathbb R^3 \!/\mathbb R {\bf 1}$, we can limit to  $\omega=(\omega_1, \omega_2, 0)$ without loss of generality.
By maximizing the following margin over $\omega$ in $\mathbb R^3 \!/\mathbb R {\bf 1}$ under the condition with no classification error,
\begin{equation}
M(\omega) := \min \left( d_{\rm tr}(x_1,H_\omega), d_{\rm tr}(x_2,H_\omega) \right) ,
\end{equation}
we obtain $\omega^* = (5+c, 5+c, 0)$ for $c>0$ as in Figure \ref{fig:WhyRobust}.

\begin{figure}[t!]
\centering
\includegraphics[width=0.22\textwidth]{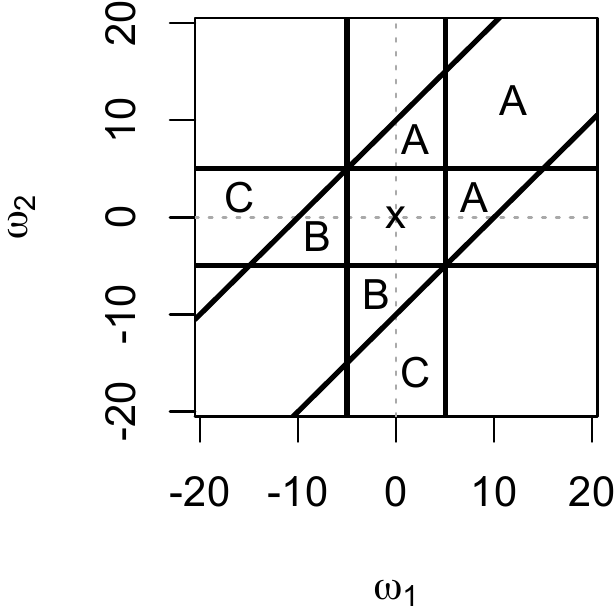} \ ~ 
\includegraphics[width=0.22\textwidth]{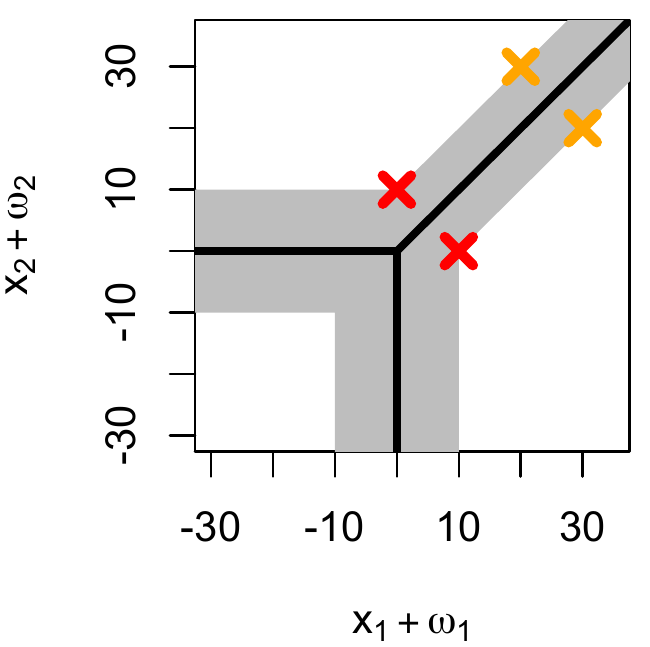}
\caption{(left) Margin $M(\omega)$ for $\omega=(\omega_1, \omega_2, 0)$.
For the regions A, it is $10-|\omega_1-\omega_2|$.
For the region $\times$ containing $(0,0)$, it is $5$+min$(\omega_1,\omega_2) = \frac{\omega_1 + \omega_2 + 10}{2} - \frac{|\omega_1-\omega_2|}{2}$.
For the regions B, it is $\frac{|\omega_1-\omega_2|}{2} - \frac{|\omega_1 + \omega_2 + 10|}{2}$.
For the regions C, it is $5-|\max(\omega_1,\omega_2)|$.
The other non-marked regions have non-zero classification errors and are ineligible.
Note that on the verge of misclassification, it is always $0$.
The maximum is attained at $\omega^* = (5+c, 5+c, 0)$ for $c>0$.
(right) Configuration in optimal solutions where margin is $10$.
The red crosses denote the data points to be classified when $\omega^* = (5, 5, 0)$.
The orange crosses denote the data points to be classified when $\omega^* = (25, 25, 0)$.
Note that both configurations attain the maximum margin.}
\label{fig:WhyRobust}
\end{figure}

In fact, $x_1 + \omega^* = (10+c,  c, 0)$ and $x_2 + \omega^* = (  c, 10+c, 0)$ lead to $d_{\rm tr}(x_1,H_{\omega^*}) = d_{\rm tr}(x_2,H_{\omega^*}) = 10$.
In Figure \ref{fig:WhyRobust} (left) for the margin $M(\omega)$, error-free regions for $\omega$ are only A, B and C.
The other regions have classification errors.
The entire shape of the error-free region reflect the Y-shape of the hyperplane $H_0$.
For example, for $\omega=(-10,-10,0)$, $x_1+\omega$ and $x_2+\omega$ belong to the same bottom left sector of $H_0$, meaning misclassification.
Note that as the margin function $M(\omega)$ is convex unless $x_1+\omega$ or $x_2+\omega$ crosses the sector borders, the linear programming is needed only once for each possible combination of labels and sectors (\cite{tang}).

Here we can impose the regularization term, $d_{\rm tr}(\omega,0)$, so that we choose the solution with the minimum norm even if the evaluation function outputs ties.
In this tradition, the unique solution is $\omega^*=( 5,  5, 0)$.

Second, let us consider the general case with $N$ points for each class in $d$ dimensions, whose coordinates are perturbed from $x_1$ or $x_2$ by standard Gaussian noises $\xi$ or $\eta$ (Figure \ref{fig:WhyRobust2}) as in the previous computational experiments.
We again need to maximize the margin:
\begin{equation}
M(\omega) := \min_{1 \leq i \leq N, ~ 1 \leq l \leq 2} d_{\rm tr}(x_{Y=l}^i,H_\omega) ,
\end{equation}
with
\begin{eqnarray}
x_{Y=1}^i = (5+\xi_1^i, -5+\xi_2^i, \xi_3^i, \xi_4^i, \ldots, \xi_d^i) , \nonumber \\
x_{Y=2}^j = (-5+\eta_1^j, 5+\eta_2^j, \eta_3^j, \eta_4^j, \ldots, \eta_d^j) ,
\end{eqnarray}
where $\xi_p^i \sim N(0,1)$ and $\eta_p^i \sim N(0,1)$ for $1 \leq p \leq d$ and $1 \leq i \leq 2$.
Let us try to find the solution near $\omega^* = (5+c, 5+c, 0, 0, \ldots, 0)$ and $d \rightarrow\infty$. 

In the previous computational experiments, we observed that for all cases of $1000$ data realizations the best classifiers used the first and the second sectors as in Figure~\ref{fig:WhyRobust2}.
Furthermore, among $x_p + \omega_p$ for $1 \leq p \leq d$, the maximum and the second maximum were attained by $x_1 + \omega_1$ and $x_2 + \omega_2$ when the training data were classified.
Therefore we focus on this situation, i.e. classification by using only sector $1$ and sector $2$, in what follows.

\begin{figure}[t!]
\centering
\includegraphics[width=0.31\textwidth]{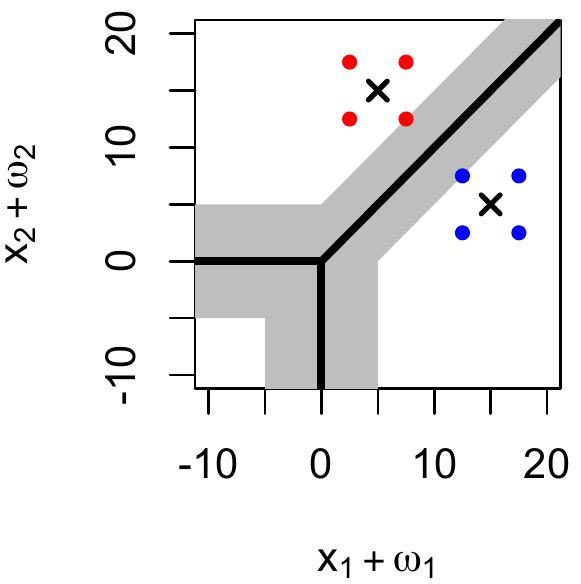}
\caption{A solution of tropical SVM that uses first and second sectors. $\omega$ is adjusted so that the tropical hyperplane is located right in the middle of the two support vectors.}
\label{fig:WhyRobust2}
\end{figure}

Under the assumption of $\omega_2 \gg 5$, which will be justified later, the tropical distance to $H_\omega$ is
\begin{eqnarray}
d_{\rm tr}(x_{Y=1}^i, H_\omega) = d_{\rm tr}(x_{Y=1}^i + \omega, H_0) \nonumber \\
= {\rm \{ max - 2nd~max~of \} }~  (5+\xi_1^i+\omega_1, -5+\xi_2^i+\omega_2, \xi_3^i+\omega_3,\nonumber\\ \ldots, \xi_{d}^i+\omega_{d}) \nonumber \\
= 10+\xi_1^i+\omega_1 - \xi_2^i - \omega_2
\end{eqnarray}
The minimum over $N$ points with class $1$ is
\begin{eqnarray}
\min_{1 \leq i \leq N} d_{\rm tr}(x_{Y=1}^i, H_\omega)
&=& 10 + \omega_1 - \omega_2 + \min_{1 \leq i \leq N} \left( \xi_1^i - \xi_2^i \right) \nonumber \\
&=& 10 + \omega_1 - \omega_2 + \xi_1^{i_*} - \xi_2^{i_*} .
\end{eqnarray}
where $i_*(\omega)$ denotes the minimizer.
Similarly for the class $2$,
\begin{eqnarray}
\min_{1 \leq j \leq N} d_{\rm tr}(x_{Y=2}^j, H_\omega)
&=& 10 + \omega_2 - \omega_1 + \min_{1 \leq j \leq N} \left( \eta_2^j - \eta_1^j \right) \nonumber \\
&=& 10 + \omega_2 - \omega_1 + \eta_2^{j_*} - \eta_1^{j_*} ,
\end{eqnarray}
where $j_*(\omega)$ denotes the minimizer.
By equating the minimum distances for the two classes,
\begin{equation}
10 + \omega_1 - \omega_2 + \xi_1^{i_*} - \xi_2^{i_*} =
10 + \omega_2 - \omega_1 + \eta_2^{j_*} - \eta_1^{j_*} ,
\end{equation}
we get
\begin{equation}
\omega_2 = \omega_1 + \frac{\xi_1^{i_*} - \xi_2^{i_*} + \eta_1^{j_*} - \eta_2^{j_*}}{2} .
\end{equation}
This theoretical prediction perfectly matched with the computational experiments.
This can be rewritten as,
\begin{eqnarray}
\omega_1 &=& 5 + c \nonumber \\
\omega_2 &=& 5 + \frac{\xi_1^{i_*} - \xi_2^{i_*} + \eta_1^{j_*} - \eta_2^{j_*}}{2} + c,
\end{eqnarray}
for $c>0$, which give the (evenly scored) maximizers of the margin under $\omega_2 \gg 5$.
Note that $\omega$ is perturbed from $(5+c,5+c)$ only slightly of order $O(1)$.
This equation illustrates that the hyperplane just listens to the positions of the support vectors.
Adding $c$ might look strange at first glance, but it is interpreted as the hyperplane's freedom to shift along the irrelevant coordinates of the support vectors.
In fact, the hyperplane for classical SVMs actually has the same freedom but you just cannot distinguish the shifted planes.

Now we can explain why tropical SVMs are robust against the curse of dimensionality.
In the computational experiments, the other $\omega_p (p\geq3)$ were smaller by around five or more.
Therefore the misclassification never happened as the maximum of $(x+\omega)$ always occurred at the first or second coordinate appropriately.

\begin{figure}[t!]
\centering
\includegraphics[width=0.34\textwidth]{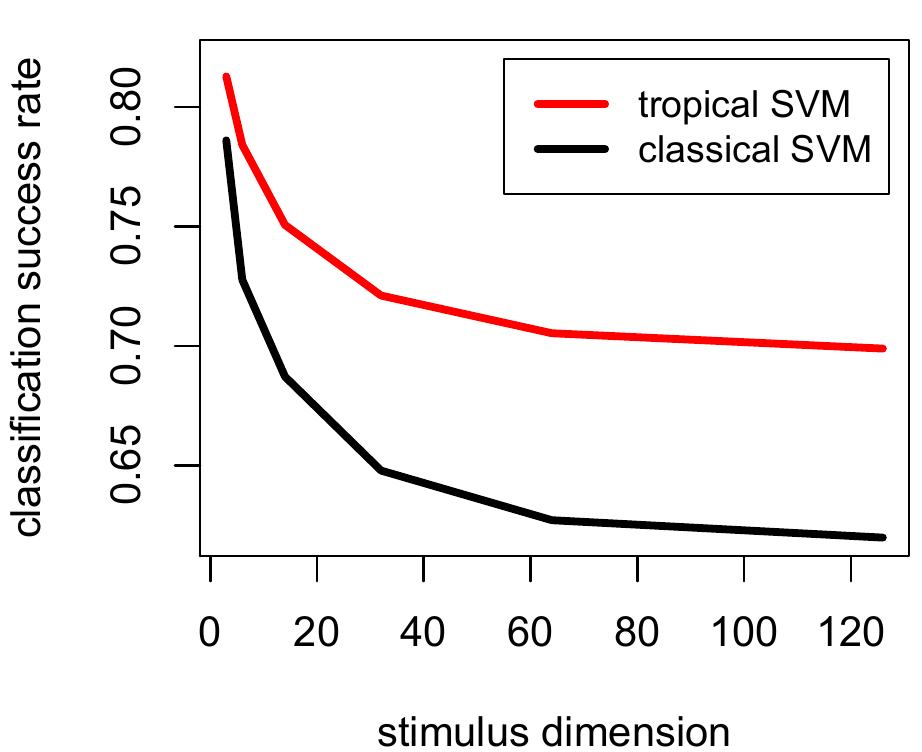}
\caption{Classification success rates of dopaminergic and nondopaminergic neurons by using various numbers of features for tropical and classical SVMs.
All features are the firing rates, i.e. the number of spikes per second, in $300$ms bins.
The first $7$ features we preferentially used are the responses to $7$ different reward stimuli, which are informative.
The following $7$ features we appended are spontaneous activities after reward stimuli, which are less informative.
The other features are spontaneous activities before reward stimuli, which are least informative.
Note that the classification success rates decreased slower for the tropical SVM than the classical SVM, suggesting that the tropical SVM is relatively robust against the curse of dimensionality.}
\label{fig:tropSVMdata}
\end{figure}

\subsection{Real Data Analysis} 
We now turn to empirical data from brain science. To summarize the data acquisition in this paragraph, we recorded the spiking activity of $493$ total neurons in the ventral tegmental area (VTA) and the substantia nigra (SN) of DAT (dopamine transporter)-cre mice using tetrodes while these mice performed various classical conditioning tasks (\cite{matsumoto16}).
We optogenetically identified $179$ dopaminergic neurons (\cite{cohen12,matsumoto16}), which enabled us to assess the classification success rates based on the neuronal tuning curves.
In each trial of the task, we delivered to a mouse one of seven stimuli, which were water rewards, air puff, etc.
The conventional tuning curve for these neurons consists of the neural responses to the seven stimuli measured as the spike count within a $300$ms time bin.

The tropical SVM for classifying dopaminergic and nondopaminergic neurons based on their activities as features was also robust against the curse of dimensionality (Figure \ref{fig:tropSVMdata}).
In order to use the same set of features, we limited to $41$ neurons including $17$ dopaminergic and $24$ nondopaminergic neurons.
That is, the sample size is $41$.
Both the training and test data consist of, randomly selected, $5$ dopaminergic and $5$ nondopaminergic neurons.
The classification performance for the test data was used to select the model or which two sectors to use in Algorithm 3 from \cite{tang}.
The cross validated classification success rates were computed for the remaining $21$ neurons.
The cross validated classification success rates averaged over the $1000$ ways of selections of training and test data were plotted in Figure \ref{fig:tropSVMdata}.

Here we did not necessarily use the entire features but rather changed the numbers of features used for the classification.
As the order of adding features can matter, we ordered and added the putatively informative features first.
After the classification success rates showed the peaks at the beginning, they decreased slower for the tropical SVM than the classical SVM (Figure \ref{fig:tropSVMdata}).

In general, we believe that although the classification success rate is enhanced by adding some most informative features at the beginning, soon it will be deteriorated by less informative features, which can be a majority in real world big data.
The tropical SVM may be superior in the latter regime of curse of dimensionality.

\section{Extension of Tropical SVMs to Function Spaces}\label{sec:functionSpace} 
{\color{black}In this section, we define tropical SVMs over a function space to enable the classification of curves, which we encounter frequently in practice.
For example, we consider neuronal tuning curves in Figure \ref{fig:distBellShapedTC} as a feature vector.
That is, we can treat a tuning curve as a point in a function space.  For example, it is valuable for identifying dopaminergic neurons in clinical researches \citep{ishikawa18,ishikawa19} to measure the distances between neuronal tuning curves and classify neurons using a tropical hyperplane boundary, which itself can be another curve in tropical SVMs.}

\subsection{Tropical Distances and Hyperplanes on Function Spaces}

Suppose we have a set of functions mapping from $\RR^d$ to $\RR$ denoted as $\mathcal{F}$ such that: 
\[
\begin{array}{c}
\mathcal{F} := \{f: \RR^s \to \RR: |f(x)| \mbox{ is bounded,} \, f(x) = f(x) + c, \\ \forall x \in \RR^s, \, \forall c \in \RR\} .
\end{array}
\]
Let $d_{\rm tr}$ be a distance between functions in  $\mathcal{F}$ such that
{\small
\[
d_{\rm tr}(f, g):= \max\{f(x) - g(x): \forall x \in \RR^s\} - \min\{f(x) - g(x): \forall x \in \RR^s\} .
\]}

\begin{example}[Functional Tropical Distances, Figure~\ref{fig:functional}]\label{ex:example:fun1}
Let $\mathcal{F}$ be a set of univariate Gaussian distribution functions with $\mu$ and $\sigma$, and mixtures of these Gaussian distributions with real valued coefficients.
Suppose $F_1$, $F_2$, $F_3$ and $F_4$ are univariate Gaussian distribution functions with $\mu_1 = \mu_2 = -2$, $\mu_3 = \mu_4 = 2$, $\sigma_1 = \sigma_3 = 1$, and $\sigma_2 = \sigma_4 = 1/2$.
Then, $d_{\rm tr}(F_1, F_2) = d_{\rm tr}(F_3, F_4) = \frac{11}{8 \sqrt{2\pi}} = 0.549$,
$d_{\rm tr}(F_1, F_3) = 0.798$, and $d_{\rm tr}(F_1, F_4) = d_{\rm tr}(F_2, F_3) =  1.197$.
\end{example}

\begin{figure*}[h!]
\centering
\includegraphics[width=0.285\textwidth]{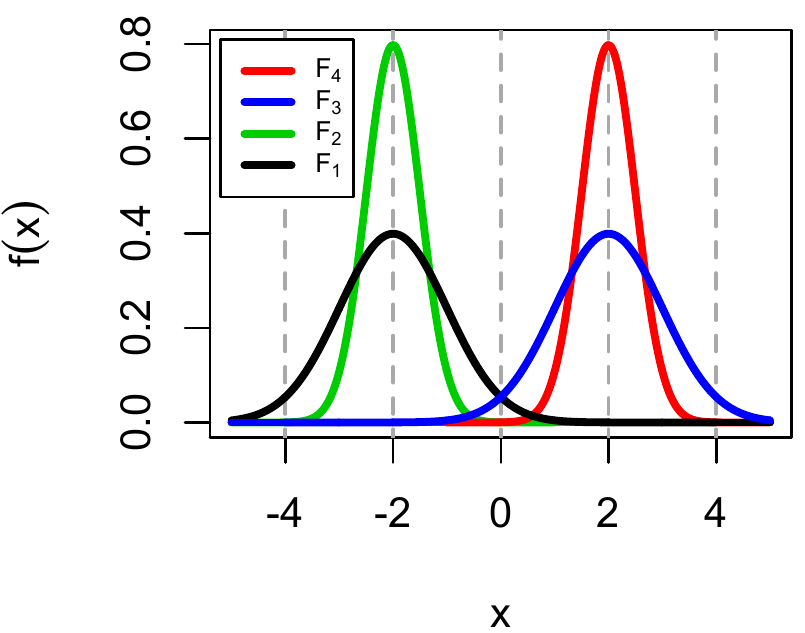} \ ~ \ ~
\includegraphics[width=0.285\textwidth]{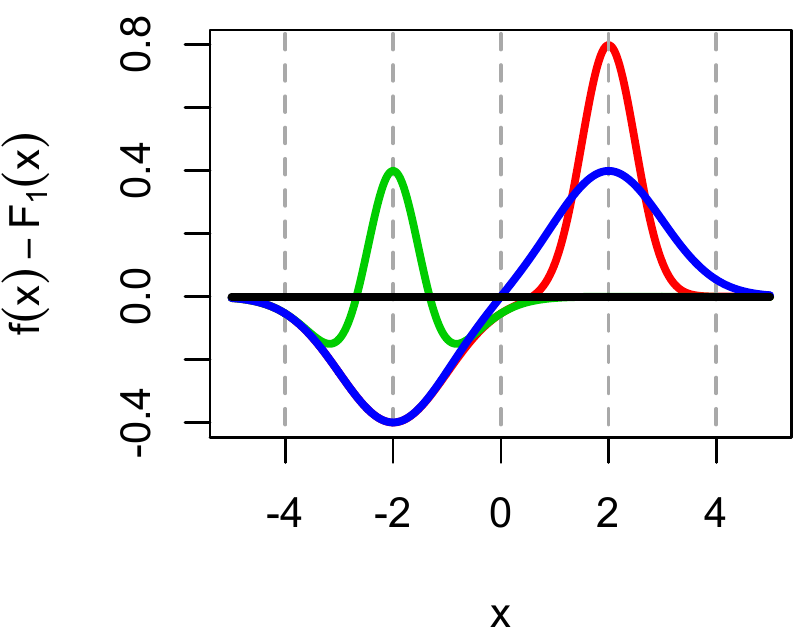} \ ~ \ ~
\includegraphics[width=0.285\textwidth]{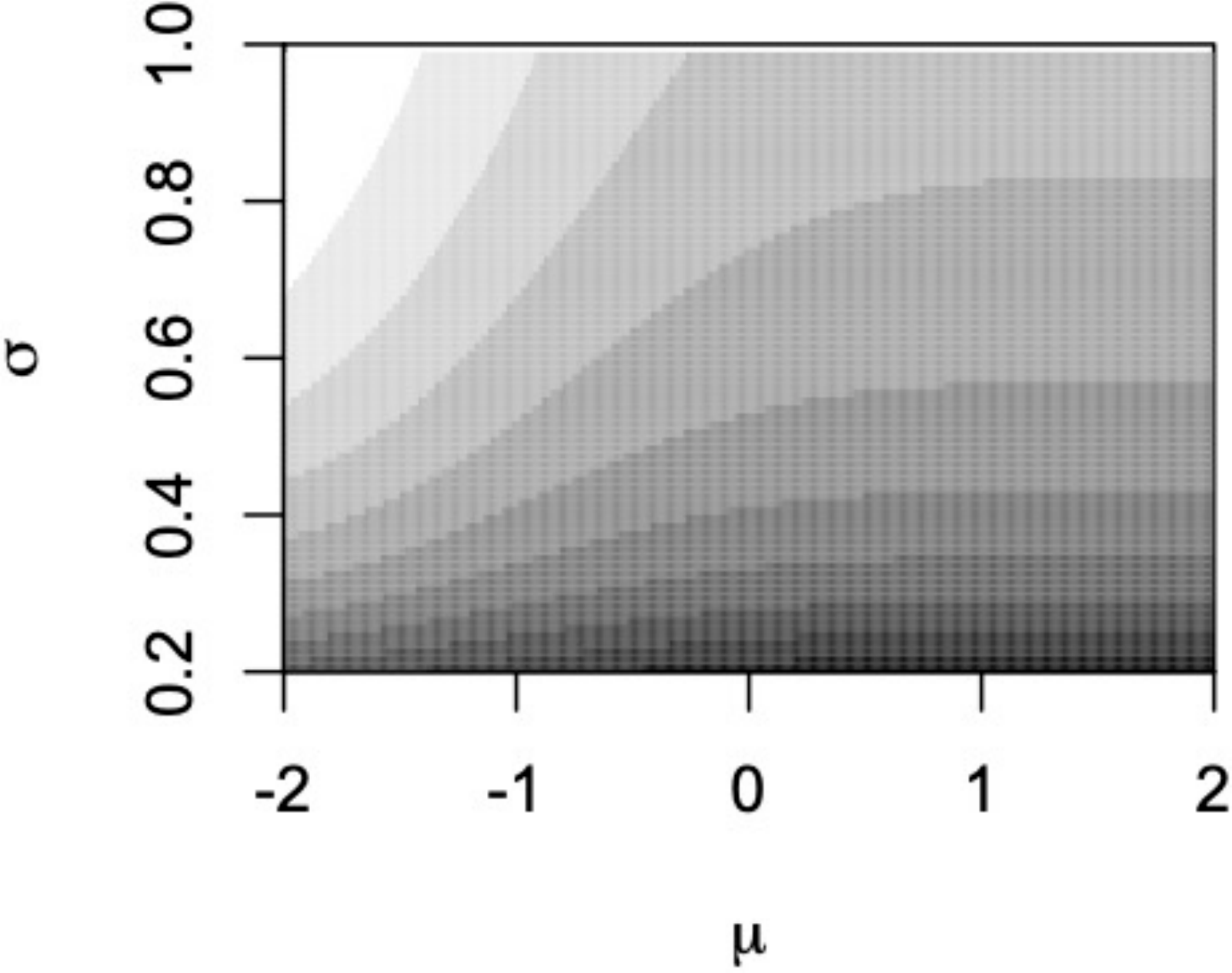}
\caption{
(left) Four example functions, $F_1$, $F_2$, $F_3$ and $F_4 \in \mathcal{F}$, which are Gaussian distribution functions with $\mu = -2$ or $2$ and $\sigma = 1$ or $0.5$.
(middle) $f(x) - F_1(x)$ for the four functions in the left figure as $f(x)$.
$F_1(x)$ denotes the reference black function with $\mu = -2$ and $\sigma = 1$.
(right) $d_{\rm tr}(f(x),F_1(x))$ is gray-color coded for various $\mu$ and $\sigma$ of $f(x)$.
Note that the distance is $0$ for $(\mu,\sigma)=(-2,1)$ and increases with increasing $\mu$ or decreasing $\sigma$: $d_{\rm tr}(F_2, F_1) = 0.549$, $d_{\rm tr}(F_3, F_1) = 0.798$, and $d_{\rm tr}(F_4, F_1) = 1.197$.
}
\label{fig:functional}
\end{figure*}

\begin{lemma}\label{lem2}
$d_{\rm tr}(f(x),g(x)) = d_{\rm tr}(-f(x),-g(x)) = d_{\rm tr}(g(x),f(x))$
\end{lemma}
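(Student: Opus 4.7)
The plan is to verify both equalities directly from the definition of the functional tropical distance. The core observation is that negating the arguments of the difference inside $\max$ and $\min$ swaps those two operators and flips the sign, and these two effects cancel out in the expression $\max - \min$.

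First, I would handle the equality $d_{\rm tr}(-f,-g) = d_{\rm tr}(g,f)$. Expanding the left-hand side via the definition gives $\max_x\{-f(x)-(-g(x))\} - \min_x\{-f(x)-(-g(x))\} = \max_x\{g(x)-f(x)\} - \min_x\{g(x)-f(x)\}$, which is exactly $d_{\rm tr}(g,f)$. This step is essentially just cancellation of signs inside the arguments.

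Next, for $d_{\rm tr}(g,f) = d_{\rm tr}(f,g)$, I would use the elementary identities $\max_x\{g(x)-f(x)\} = -\min_x\{f(x)-g(x)\}$ and $\min_x\{g(x)-f(x)\} = -\max_x\{f(x)-g(x)\}$. Substituting into the definition yields
\begin{equation*}
d_{\rm tr}(g,f) = -\min_x\{f(x)-g(x)\} - \bigl(-\max_x\{f(x)-g(x)\}\bigr) = \max_x\{f(x)-g(x)\} - \min_x\{f(x)-g(x)\},
\end{equation*}
which is $d_{\rm tr}(f,g)$. Chaining the two equalities completes the claim.

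No step is really an obstacle here; the argument is essentially bookkeeping with $\max$ and $\min$ under sign changes. The only minor point to be careful about is that, since $f, g \in \mathcal{F}$ are assumed to be bounded, the $\max$ and $\min$ (interpreted as $\sup$ and $\inf$ if necessary) over $x \in \RR^s$ exist as finite real numbers, so all manipulations above are well-defined.
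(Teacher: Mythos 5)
Your proof is correct and uses essentially the same ingredients as the paper's: the sign cancellation $-f-(-g)=g-f$ to identify $d_{\rm tr}(-f,-g)$ with $d_{\rm tr}(g,f)$, and the identities $\max\{-h\}=-\min\{h\}$, $\min\{-h\}=-\max\{h\}$ to establish symmetry; the paper merely chains the same equalities in a single display rather than as two separate steps. Your remark about boundedness guaranteeing that the $\max$/$\min$ (as $\sup$/$\inf$) are finite is a reasonable precaution the paper leaves implicit.
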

\begin{proof}
\[
\begin{array}{cl}
  & d_{\rm tr}(f(x),g(x)) \\
= & -\min\{f(x)-g(x): \forall x \in \RR^s\} + \max\{f(x)-g(x): \forall x \in \RR^s\} \\ 
= & \max\{-f(x)+g(x): \forall x \in \RR^s\} - \min\{-f(x)+g(x): \forall x \in \RR^s\} \\ 
= & d_{\rm tr}(-f(x),-g(x)) \\
= & d_{\rm tr}(g(x)-f(x),0) \\
= & d_{\rm tr}(g(x),f(x)) .
\end{array}
\]
\end{proof}

\begin{lemma}
$d_{\rm tr}(f(x),0)$ is a norm on $\mathcal{F}$.
\end{lemma}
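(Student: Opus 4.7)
The plan is to verify the four axioms of a norm directly from the definition $d_{\rm tr}(f,0) = \max\{f(x):x\in\RR^s\} - \min\{f(x):x\in\RR^s\}$, using the fact that $\mathcal{F}$ identifies any function with its translates by a constant, so that the zero element is any constant function.

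First I would dispense with non-negativity: since $\max\{f(x)\} \geq \min\{f(x)\}$ for any bounded $f$, we immediately get $d_{\rm tr}(f,0) \geq 0$. The definiteness axiom amounts to showing $d_{\rm tr}(f,0)=0$ iff $f$ is the zero element of $\mathcal{F}$. If $\max f = \min f$, then $f$ is constant, and constants are identified with $0$ in $\mathcal{F}$ by the defining relation $f(x) = f(x)+c$. Conversely, if $f \equiv c$ then $\max f - \min f = 0$.

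Next I would verify absolute homogeneity, $d_{\rm tr}(\alpha f, 0) = |\alpha|\, d_{\rm tr}(f,0)$ for $\alpha \in \RR$. For $\alpha \geq 0$, $\max\{\alpha f(x)\} = \alpha \max f$ and $\min\{\alpha f(x)\} = \alpha \min f$, giving $\alpha(\max f - \min f)$. For $\alpha < 0$, multiplication by $\alpha$ swaps the roles of max and min, so $\max\{\alpha f\} - \min\{\alpha f\} = \alpha \min f - \alpha \max f = (-\alpha)(\max f - \min f)$. Both cases yield $|\alpha|\, d_{\rm tr}(f,0)$.

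Finally I would prove the triangle inequality $d_{\rm tr}(f+g, 0) \leq d_{\rm tr}(f,0) + d_{\rm tr}(g,0)$ using the elementary bounds $\max(f+g) \leq \max f + \max g$ and $\min(f+g) \geq \min f + \min g$, so that
\[
\max(f+g) - \min(f+g) \leq (\max f - \min f) + (\max g - \min g).
\]
None of these steps presents a real obstacle; the only subtle point is the definiteness axiom, which relies on the quotient structure of $\mathcal{F}$ under translation by constants. This is precisely the same phenomenon seen in Definition~\ref{eq:tropmetric}, where the tropical projective torus collapses the $\RR\mathbf{1}$ direction so that the generalized Hilbert projective metric becomes a genuine metric (equivalently, a norm on the quotient), and Lemma~\ref{lem2} already ensures the symmetry $d_{\rm tr}(f,0) = d_{\rm tr}(-f,0)$ that is consistent with the homogeneity axiom at $\alpha=-1$.
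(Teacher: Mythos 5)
Your proposal is correct and follows essentially the same route as the paper's proof: definiteness via $\max f = \min f$ forcing $f$ to be constant (hence the zero element of the quotient $\mathcal{F}$), homogeneity by reducing the sign of the scalar (the paper cites Lemma~\ref{lem2}, you do the sign case directly), and the triangle inequality from subadditivity of $\max$ and superadditivity of $\min$, which is the same elementary bound the paper obtains by writing $-\min(f+g)=\max(-f-g)$. The only difference is presentational, so no further comparison is needed.
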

\begin{proof}
For the identity,

$d_{\rm tr}(f, 0) = 0$  if and only if $\max\{f(x): \forall x \in \RR^s\} = \min\{f(x): \forall x \in \RR^s\},$
meaning that $f(x)$ is constant. 
Next, by Lemma \ref{lem2} with $g(x)=0$
\[
d_{\rm tr}(a f, 0) = d_{\rm tr}(|a| f, 0) = |a| d_{\rm tr}(f, 0)
\]
for $a \in \RR$. For triangle inequality,
{\small
\[
\begin{array}{cl}
     & d_{\rm tr}(f(x)+g(x),0) \\
   = & \max\{f(x)+g(x): \forall x \in \RR^s\} - \min\{f(x)+g(x): \forall x \in \RR^s\} \\
   = & \max\{f(x)+g(x): \forall x \in \RR^s\} + \max\{-f(x)-g(x): \forall x \in \RR^s\} \\
\leq & \max\{f(x): \forall x \in \RR^s\}+\max\{g(x): \forall x \in \RR^s\}\\
&+\max\{-f(x): \forall x \in \RR^s\}+\max\{-g(x): \forall x \in \RR^s\} \\
   = & \max\{f(x): \forall x \in \RR^s\}+\max\{g(x): \forall x \in \RR^s\}\\
   &-\min\{f(x): \forall x \in \RR^s\}-\min\{g(x): \forall x \in \RR^s\} \\
   = & d_{\rm tr}(f(x),0) + d_{\rm tr}(g(x),0) .\\
\end{array}
\]}
\end{proof}

\begin{lemma}\label{lem3}
$d_{\rm tr}$ is a metric on $\mathcal{F}$.
\end{lemma}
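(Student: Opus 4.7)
The plan is to reduce the four metric axioms to the norm properties already established in the preceding lemma together with the symmetry relation in Lemma \ref{lem2}. The key observation, which follows immediately from the definition of $d_{\rm tr}$, is the translation-invariance identity
\[
d_{\rm tr}(f(x),g(x)) \;=\; d_{\rm tr}(f(x)-g(x),\,0),
\]
since the definition of $d_{\rm tr}$ depends only on the pointwise difference $f(x)-g(x)$. Once this identity is in hand, every axiom becomes a one-line consequence of the norm structure.

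First I would dispatch non-negativity from $\max\{\cdot\}\geq\min\{\cdot\}$, and symmetry directly from Lemma \ref{lem2}. For the identity of indiscernibles, I would note that $d_{\rm tr}(f,g)=d_{\rm tr}(f-g,0)=0$ forces $f(x)-g(x)$ to attain equal max and min, i.e.\ $f-g$ is a constant function; but by construction $\mathcal{F}$ identifies functions that differ by a constant (this is exactly the relation $f(x)=f(x)+c$ built into the definition of $\mathcal{F}$), so $f=g$ in $\mathcal{F}$. For the triangle inequality, I would chain the identity above with the triangle inequality for the norm $d_{\rm tr}(\cdot,0)$ proved in the previous lemma:
\[
d_{\rm tr}(f,h) = d_{\rm tr}((f-g)+(g-h),0) \leq d_{\rm tr}(f-g,0)+d_{\rm tr}(g-h,0) = d_{\rm tr}(f,g)+d_{\rm tr}(g,h).
\]

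There is essentially no hard step here. The only point requiring care is the identity of indiscernibles, where one must invoke the quotient structure of $\mathcal{F}$ rather than treat its elements as literal functions; overlooking this would leave only a pseudometric. Everything else is a mechanical translation of the already-proved norm axioms through the translation-invariance identity.
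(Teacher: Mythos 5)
Your proof is correct and takes essentially the same route as the paper's, which likewise reduces the metric axioms to the norm of the difference via the identity $d_{\rm tr}(f,g)=d_{\rm tr}(f-g,0)$. You are in fact more careful than the paper on the identity of indiscernibles, rightly observing that $d_{\rm tr}(f,g)=0$ only forces $f-g$ to be constant and that one must invoke the quotient structure of $\mathcal{F}$ (functions identified up to additive constants) to conclude $f=g$.
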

\begin{proof}
$d_{\rm tr}$ is a metric because it is induced by the norm of the difference,
\[
d_{\rm tr}(f(x),g(x)) = d_{\rm tr}(f(x)-g(x),0) .
\]
\end{proof}

By the proof of Lemma \ref{lem3}, we have the following theorem:
\begin{theorem}
$(\mathcal{F}, d_{\rm tr})$ is a normed vector space.
\end{theorem}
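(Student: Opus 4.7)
The plan is to verify the three ingredients required for $(\mathcal{F}, d_{\rm tr})$ to qualify as a normed vector space: that $\mathcal{F}$ carries a natural real vector space structure, that $\|\cdot\| := d_{\rm tr}(\cdot, 0)$ is a norm, and that $d_{\rm tr}$ is the metric induced by this norm. The latter two are already packaged in the preceding two lemmas, so the remaining work is essentially to check the vector-space structure and then assemble everything.

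First I would equip $\mathcal{F}$ with pointwise addition and scalar multiplication. Closure under these operations is immediate from boundedness: if $f, g \in \mathcal{F}$ then $|f(x)+g(x)| \leq |f(x)| + |g(x)|$ is still bounded, and for any $a \in \RR$ we have $|af(x)| = |a| \cdot |f(x)|$ bounded. The zero function serves as the additive identity, and all remaining axioms (associativity, commutativity, distributivity, existence of additive inverses) are inherited pointwise from $\RR$. Furthermore, the defining relation $f(x) \sim f(x) + c$ for $c \in \RR$, by analogy with the tropical projective torus, makes $\mathcal{F}$ a quotient of bounded functions by the subspace of constants; since constants form a vector subspace, the quotient is again a real vector space.

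Next, I would invoke the previous lemma, which establishes that $\|f\| := d_{\rm tr}(f, 0)$ satisfies positivity (read as $\|f\| = 0$ iff $f$ lies in the zero equivalence class, i.e.\ $f$ is constant), positive homogeneity, and the triangle inequality. Finally, the identity $d_{\rm tr}(f(x), g(x)) = d_{\rm tr}(f(x) - g(x), 0) = \|f - g\|$, which is precisely the key step used in the proof of Lemma \ref{lem3}, shows that $d_{\rm tr}$ is the metric induced by this norm. Assembling these three observations yields the theorem.

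The main subtlety, rather than a genuine obstacle, is handling the quotient-by-constants carefully: the raw functional $f \mapsto d_{\rm tr}(f, 0)$ has nontrivial null space consisting of constant functions, so one must either work with equivalence classes from the outset or explicitly pass to the quotient space to obtain a genuine norm (as opposed to a mere seminorm). This mirrors exactly the treatment of the tropical projective torus $\RR^d \!/ \RR \mathbf{1}$ earlier in the paper, and once this identification is made the proof reduces to a one-line invocation of the preceding lemmas together with closure of $\mathcal{F}$ under pointwise linear combinations.
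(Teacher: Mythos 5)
Your proof is correct and follows essentially the same route as the paper, which simply combines the preceding two lemmas (that $d_{\rm tr}(\cdot,0)$ is a norm and that $d_{\rm tr}(f,g)=d_{\rm tr}(f-g,0)$ is the induced metric). You are in fact somewhat more careful than the paper: you explicitly verify closure of $\mathcal{F}$ under pointwise linear combinations and note that one must pass to the quotient by constant functions to turn the seminorm into a genuine norm, a point the paper leaves implicit in its definition of $\mathcal{F}$.
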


\begin{definition}[Tropical Hyperplane in $\mathcal{F}$]
Let $B_{\epsilon}(x)$ be an open ball around $ x \in \RR^s$ with its radius $\epsilon > 0$.
For any $\omega\in \mathcal{F}$ and for $\epsilon > 0$, the {\em tropical hyperplane defined by $\omega$ and $\epsilon$}, denoted by $H_{\omega, \epsilon}$, is the set of points $f\in \mathcal{F}$ such that 
  $$|\{\omega(y) + f(y) = \omega(x^*) + f(x^*): y \in (\RR^s - B_{\epsilon}(x^*))\}| \geq 1,$$ where
  \[
x^* \in \argmax_{x \in \RR^s} \left(f(x) + \omega(x)\right).
\]
  We call $\omega \in \mathcal{F}$ the {\em normal vector} of $H_{\omega, \epsilon}$.
    \end{definition}

\begin{definition}[Tropical Distance to a Tropical Hyperplane in $\mathcal{F}$]
The {\em tropical distance} from a point $f\in \mathcal{F}$ to a tropical hyperplane $H_{\omega, \epsilon}$ is defined as
$$d_{\rm tr}(f, H_{\omega, \epsilon})\;:=\;\min\{d_{\rm tr}(f, g)\;|\;g\in H_{\omega, \epsilon}\}.$$
\end{definition}

\begin{lemma}\label{lem:hy:fun1}
$$d_{\rm tr}(f, H_{\omega, \epsilon})\;=\; d_{\rm tr}(f+\omega, H_{0, \epsilon}).$$
\end{lemma}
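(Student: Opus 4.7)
The plan is to mimic the finite-dimensional proof of Corollary \ref{cry:phdis} by showing that the translation map $T_{\omega} : g \mapsto g + \omega$ is an isometry of $(\mathcal{F}, d_{\rm tr})$ that carries $H_{\omega,\epsilon}$ bijectively onto $H_{0,\epsilon}$; the lemma will then follow by rewriting the infimum defining the distance from a point to a set.

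First, I would observe that $T_{\omega}$ preserves tropical distances. Since $(f+\omega)(x) - (g+\omega)(x) = f(x) - g(x)$ pointwise on $\RR^s$, the supremum and the infimum in the definition of $d_{\rm tr}(f+\omega, g+\omega)$ coincide with those in $d_{\rm tr}(f,g)$, yielding $d_{\rm tr}(f+\omega, g+\omega) = d_{\rm tr}(f,g)$ for every $f, g \in \mathcal{F}$. This step is immediate from the definition of $d_{\rm tr}$ on $\mathcal{F}$.

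Second, I would verify that $T_{\omega}(H_{\omega,\epsilon}) = H_{0,\epsilon}$. Given $g \in H_{\omega,\epsilon}$, choose $x^{*} \in \argmax_{x \in \RR^s}(g(x)+\omega(x))$ together with some $y^{*} \in \RR^{s} \setminus B_{\epsilon}(x^{*})$ satisfying $\omega(y^{*}) + g(y^{*}) = \omega(x^{*}) + g(x^{*})$. Setting $g' := g + \omega$, we have $x^{*} \in \argmax_{x} g'(x)$ and $g'(y^{*}) = g'(x^{*})$ with $y^{*}$ outside $B_{\epsilon}(x^{*})$, so $g' \in H_{0,\epsilon}$. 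The reverse inclusion is identical because $T_{\omega}$ is invertible with inverse $T_{-\omega}$, and the $\argmax$ conditions transport symmetrically between the two hyperplanes since $g$ and $g'$ differ only by the shift $\omega$.

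Third, combining the isometry property with the bijection of hyperplanes,
\begin{align*}
d_{\rm tr}(f, H_{\omega,\epsilon}) &= \inf_{g \in H_{\omega,\epsilon}} d_{\rm tr}(f, g) \\
&= \inf_{g \in H_{\omega,\epsilon}} d_{\rm tr}(f+\omega, g+\omega) \\
&= \inf_{g' \in H_{0,\epsilon}} d_{\rm tr}(f+\omega, g') \\
&= d_{\rm tr}(f+\omega, H_{0,\epsilon}),
\end{align*}
where the third equality is the change of variables $g' = g + \omega$ justified by the bijection established in the second step. The only subtlety, and what I expect to be the main point requiring care rather than difficulty, is the bookkeeping around the $\argmax$ point $x^{*}$ in the definition of $H_{\omega,\epsilon}$: one must check that a maximizer of $g + \omega$ serving as $x^{*}$ for the membership test of $g$ in $H_{\omega,\epsilon}$ is automatically a maximizer of $g' = g + \omega$ for the membership test of $g'$ in $H_{0,\epsilon}$, which holds tautologically since the two functions are identical. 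If the ambient setting were altered to require existence of $\argmax$ via compactness, that would be the genuine obstacle, but under the stated definition it does not arise.
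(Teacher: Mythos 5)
Your proposal is correct and follows essentially the same route as the paper: the paper's proof is exactly the change of variables $h = g + \omega$ carried out inline inside the minimization, using that $d_{\rm tr}(f, h-\omega) = d_{\rm tr}(f+\omega, h)$ and that the membership condition for $H_{\omega,\epsilon}$ transforms into that for $H_{0,\epsilon}$. Your factoring of the argument into an isometry step plus a bijection of hyperplanes is just a cleaner packaging of the same idea (and handles the $\argmax$ bookkeeping more explicitly than the paper does).
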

\begin{proof}
$H_{\omega, \epsilon}$ is defined as:
\[
\begin{array}{c}
H_{\omega, \epsilon} = \{g \in \mathcal{F}: 
|\{\omega(y) + g(y) = \omega(x^*) + g(x^*):\\ y \in (\RR^s - B_{\epsilon}(x^*))\}| \geq 1\}, 
\end{array}
\]
where
\[
x^* \in \argmax_{x \in \RR^s} (\omega(x) + g(x)).
\]
Thus
{\small
\[
\begin{array}{rl}
&d_{\rm tr}(f, H_\omega) \\
=& \Bigl\{d_{\rm tr}(f, g): |\{\omega(y) + g(y) = \omega(x^*) + g(x^*): \\
&y \in (\RR^s - B_{\epsilon}(x^*))\}| \geq 1 \Bigr\}\\
=& \Bigl\{\max\{f(x) - g(x)\} - \min\{f(x) - g(x)\}: \\
&|\{\omega(y) + g(y) 
= \omega(x^*) + g(x^*): y \in (\RR^s - B_{\epsilon}(x^*))\}| \geq 1 \Bigr\}\\
=& \Bigl\{\max\{f(x) - (h(x) - \omega(x))\} - \min\{f(x) - (h(x) - \omega(x))\}: \\
 & |\{h(y)  
= h(x^*) : y \in (\RR^s - B_{\epsilon}(x^*))\}| \geq 1 \Bigr\}\\
=& \Bigl\{\max\{(f(x) + \omega(x)) - h(x)\} - \min\{(f(x) + \omega(x)) - h(x) \}: \\
 & |\{h(y)  
= h(x^*) : y \in (\RR^s - B_{\epsilon}(x^*))\}| \geq 1 \Bigr\}\\
=& \left\{d_{\rm tr}(f+w, g): |\{g(y)  
= g(x^*) : y \in (\RR^s - B_{\epsilon}(x^*))\}| \geq 1 \right\}\\
=& d_{\rm tr}(f+\omega, H_{0, \epsilon}) .
\end{array}
\]}
\end{proof}

\begin{lemma}\label{lem:hy:fun2}
Let $x^* \in  \argmax_{x \in \RR^d}f(x)$. Then
$d_{\rm tr}(f, H_{0, \epsilon}) = \max\{f(x): x \in \RR^d\} - \max\{f(y): y \in (\RR^s - B_{\epsilon}(x^*))\}$.
\end{lemma}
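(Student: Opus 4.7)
The strategy is to prove the equality by establishing both inequalities $d_{\rm tr}(f, H_{0,\epsilon}) \leq M - M'$ and $d_{\rm tr}(f, H_{0,\epsilon}) \geq M - M'$, where $M := f(x^*) = \max_x f(x)$ and $M' := \max\{f(y) : y \in \RR^s - B_\epsilon(x^*)\}$. Fix a maximizer $y^* \in \argmax\{f(y) : y \notin B_\epsilon(x^*)\}$, so that $f(y^*) = M'$ and $|y^* - x^*| \geq \epsilon$. Throughout, I would use the pointwise decomposition
$$d_{\rm tr}(f,g) = \max_{x \in \RR^s}\bigl(f(x)-g(x)\bigr) + \max_{x \in \RR^s}\bigl(g(x)-f(x)\bigr),$$
which follows directly from Definition 2 of the tropical distance between functions.

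For the upper bound, I would exhibit the explicit candidate $g^*(x) := \min\{f(x), M'\}$ and show that it lies in $H_{0,\epsilon}$ and realizes distance $M-M'$ from $f$. Since $f(x^*) = M \geq M'$, we have $g^*(x^*) = M' = g^*(y^*)$, so the global maximum of $g^*$ equals $M'$ and is attained at the $\epsilon$-separated pair $(x^*, y^*)$; this places $g^*$ in $H_{0,\epsilon}$. Moreover $f(x) - g^*(x) = \max\{0, f(x)-M'\}$ is non-negative with supremum $M - M'$ at $x^*$ and infimum $0$, so $d_{\rm tr}(f, g^*) = (M - M') - 0 = M - M'$.

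For the lower bound, I would take an arbitrary $g \in H_{0,\epsilon}$, set $L := \max g$, and use the defining property of $H_{0,\epsilon}$ to produce two points $p_1, p_2$ with $g(p_1) = g(p_2) = L$ and $|p_1 - p_2| \geq \epsilon$. The first summand in the decomposition satisfies $\max_x(f - g) \geq f(x^*) - g(x^*) \geq M - L$, since $g(x^*) \leq L$. The second summand is bounded by picking a point $p \in \{p_1, p_2\}$ that lies outside $B_\epsilon(x^*)$; then $f(p) \leq M'$ by the very definition of $M'$, whence $\max_x(g - f) \geq g(p) - f(p) \geq L - M'$. Adding the two estimates yields $d_{\rm tr}(f,g) \geq (M - L) + (L - M') = M - M'$.

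The hard part will be justifying the selection of $p \in \{p_1, p_2\}$ outside $B_\epsilon(x^*)$: since the open ball has diameter $2\epsilon$, the condition $|p_1 - p_2| \geq \epsilon$ alone does not force one of the points out. I plan to close this gap by a reduction argument: using Lemma \ref{lem:hy:fun1} together with a perturbation of $g$ that cannot increase $d_{\rm tr}(f, g)$, one may assume without loss of generality that $x^* \in \argmax g$ itself; the companion point guaranteed by the $H_{0,\epsilon}$ membership condition, applied with $\tilde x = x^*$, then lies outside $B_\epsilon(x^*)$ by construction, and the above inequalities combine to give the claimed equality.
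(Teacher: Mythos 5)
Your upper-bound half is correct and, as it happens, it is essentially everything the paper's own proof contains: the paper merely exhibits the witness $g = f + \alpha\,\mathcal{I}_{\RR^s - B_{\epsilon}(x^*)}$ (lift $f$ by $\alpha = M-M'$ outside the ball), asserts $g \in H_{0,\epsilon}$, and stops, so only $d_{\rm tr}(f,H_{0,\epsilon}) \leq M-M'$ is actually established there. Your capped witness $g^* = \min\{f, M'\}$ is a valid alternative with the same distance; the only caveat is that $\argmax g^* = \{x : f(x)\geq M'\}$ may contain points with no $\epsilon$-distant companion maximizer, so if one reads the membership condition as quantified over \emph{all} points of the argmax, the paper's witness (which keeps every auxiliary maximizer outside $B_{\epsilon}(x^*)$) is the safer choice.

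The real issue is the lower bound, and the gap you flag there cannot be closed, because the inequality $d_{\rm tr}(f,H_{0,\epsilon}) \geq M - M'$ is false in general. The obstruction is exactly the one you identify: membership in $H_{0,\epsilon}$ only forces $\max g$ to be attained at two points at distance $\geq \epsilon$, and since $B_{\epsilon}(x^*)$ has diameter $2\epsilon$, both can lie inside $B_{\epsilon}(x^*)$, where $f$ may greatly exceed $M'$. Concretely, take $s=1$, $\epsilon = 1$, $f(x) = \max(0, 1-|x|)$, so $x^* = 0$, $M = 1$, $M' = 0$, and let $g$ agree with $f$ except $g(\pm 1/2) = 1+\delta$. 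Then $\argmax g = \{-1/2, 1/2\}$, the two maximizers are at distance $\epsilon$, so $g \in H_{0,\epsilon}$, yet $d_{\rm tr}(f,g) = 1/2 + \delta < 1 = M - M'$. (If continuity of $g$ is demanded, replace the point modifications by narrow bumps; note the paper's own witness is already discontinuous.) This also kills your proposed repair: by your own two-term estimate, any $\tilde g \in H_{0,\epsilon}$ with $x^* \in \argmax \tilde g$ satisfies $d_{\rm tr}(f,\tilde g) \geq M - M'$, so no distance-nonincreasing perturbation can relocate the argmax of the counterexample's $g$ to $x^*$. The honest conclusion is that only ``$\leq$'' holds as the lemma is stated; equality would require strengthening the definition of $H_{0,\epsilon}$ (for instance, demanding a second maximizer at distance $\geq 2\epsilon$, which does force one of the two points outside $B_{\epsilon}(x^*)$ and makes your two-term argument go through).
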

\begin{proof}
Let $\alpha = \max\{f(x): x \in \RR^d\} - \max\{f(y): y \in (\RR^s - B_{\epsilon}(x^*))\}$.  Then, $g(x) = f(x) + \alpha \mathcal{I}(y)$, where $y \in (\RR^s - B_{\epsilon}(x^*))\}$ and $\mathcal{I}(y)$ is the indicator function.  Notice that $g(x)$ is in $H_{0, \epsilon}$.
\end{proof}

\begin{example}[Tropical Hyperplanes in $\mathcal{F}$]\label{ex:example:fun2}
We will use the same set up as in Example \ref{ex:example:fun1} and Figure \ref{fig:functional}. 
Let $\epsilon = 1$.  Then by Lemma \ref{lem:hy:fun2}, we have
\[
\begin{array}{rl}
&d_{\rm tr} (F_1, H_{0, \epsilon}) \\
=& \max\{F_1(x): x \in \RR\} - \max\{F_1(y): y \in (\RR - B_{\epsilon}(x^*)), \\
& x^*  = \argmax(F_1(x): x \in \RR)\}\\
=& F_1(-2) - F_1(-1)\\
= & 0.157 . 
\end{array}
\]
Similarly,
\[
\begin{array}{rl}
d_{\rm tr} (F_3, H_{0, \epsilon}) =& d_{\rm tr} (F_1, H_{0, \epsilon}) = F_1(-2) - F_1(-1) = 0.157, \\
d_{\rm tr} (F_4, H_{0, \epsilon}) =& d_{\rm tr} (F_2, H_{0, \epsilon}) = F_2(-2) - F_2(-1) = 0.690.
\end{array}
\]
Suppose we have $\omega = F_3$.  Then, by Lemma \ref{lem:hy:fun1}, we have
\[
\begin{array}{rl}
&d_{\rm tr} (F_1, H_{\omega, \epsilon}) \\
=& d_{\rm tr}(F_1 + \omega, H_{0, \epsilon})\\
=& d_{\rm tr}(F_1 + F_3, H_{0, \epsilon})\\
=& (F_1(-2) + F_3(-2)) - (F_1(2) + F_3(2))\\
= & 0.399 - 0.399\\ 
= & 0.
\end{array}
\]
Thus, $F_1 \in H_{F_3, \epsilon}$. Similarly, $F_3 \in H_{F_1, \epsilon}$, $F_2 \in H_{F_4, \epsilon}$ and $F_4 \in H_{F_2, \epsilon}$.
\end{example}

\begin{definition}[Sectors of Tropical Hyperplane in $\mathcal{F}$]
Each tropical hyperplane $H_{\omega, \epsilon}$ divides $\mathcal{F}$ into components, which are {\em open sectors},
{\small
$$
\begin{array}{l}
S_{\omega, \epsilon}^{x}:=\{\;f\in \mathcal{F}|\\ \omega(x_0)+f(x_0)>\omega(y)+f(y),\forall x_0 \in B_{\epsilon}(x),  \forall y\not \in  B_{\epsilon}(x)\}, x \in \RR^s \}.
\end{array}
$$}
\end{definition}

\begin{example}
We will use the same set up as in Example \ref{ex:example:fun1}.  Again, let $\mathcal{F}$ be a set of univariate Gaussian distribution functions with $\mu$ and $\sigma$, and mixtures of these Gaussian distributions with real valued coefficients. Suppose $\omega \equiv 0$ which is a Gaussian distribution function with $\mu = 0$ and $\sigma \to \infty$, $\epsilon = 1$, and $x = 0$. Then
$$
\begin{array}{l}
S_{\omega, 1}^{0}:=\{\mbox{Mixtures of univariate Gaussian}\\\mbox{distributions with real coefficients whose argmax in }[-1, 1]] \}.
\end{array}
$$
Also note that a set of 
\[
\begin{array}{l}
\{\mbox{Gaussian distribution functions with }\mu \in [-1, 1]\\ \mbox{ and }\sigma > L \mbox{ for some } L > 0\}
\end{array}
\]
is in this open sector.
\end{example}

\subsection{Tropical SVMs on Function Spaces} 
For a given $\epsilon > 0$, suppose $F \in \mathcal{F}$ and $Y^1, \, Y^2 \in \{0, 1\}$ are random variables such that there exist $\omega^* \in \mathcal{F}$ with
{\tiny
\begin{equation}\label{equation:tropSVMcond}
\Bigl(\bigcup_{x^* \in \argmax_{x \in \RR^s} (\omega^*(x) + F(x)| Y^1)}B_{\epsilon}(x^*)\Bigr) \bigcap  \Bigl(\bigcup_{x^* \in \argmax_{x \in \RR^s} (\omega^*(x) + F(x)| Y^2)}B_{\epsilon}(x^*) \Bigr) = \emptyset.
\end{equation}}

Now we set up a tropical SVM over $\mathcal{F}$, whose solution $\omega$ satisfies Equation~\ref{equation:tropSVMcond}.
Let $\mathcal{D}_{\mathcal{F}}$ be the distribution on the joint random variable $(F, Y)$ for $F \in \mathcal{F}$ and $Y \in \{0, 1\}$, and let $\mathcal{S}_{\mathcal{F}}$ be the sample $\mathcal{S}_{\mathcal{F}}:= \{(F^1, Y^1), \ldots , (F^n, Y^n)\}$.
For a given $\epsilon > 0$, we formulate an optimization problem for solving the normal vector $\omega \in \mathcal{F}$ of an optimal tropical separating hyperplane $H_{\omega, \epsilon}$ for random variables $X \in \mathcal{F}$ given $Y \in \{0, 1\}$: For some cost $C \in \RR$,

{\tiny
\begin{equation}\label{equation:24:fun}
\begin{matrix}
\displaystyle \max \limits_{\omega \in \mathcal{F}}\min \limits_{F(x) \in \mathcal{S}_{\mathcal{F}}} \left(
\underbrace{\left(\max_{x \in \RR^s}(F(x)+\omega(x))-\max_{z \in \mathcal{Z}}(F(z)+\omega(z))\right)}_\text{margin}+ \underbrace{\frac{C}{n} \sum_{k=1}^n \max \left\{\mathcal{I}_{B_{\epsilon}(X^*)}(x) - Y^k \right\}}_\text{error}\right), 
\end{matrix}
\end{equation}}
where $X^* \in \argmax_{x \in \RR^s} (\omega(x) + F(x))$ and $\mathcal{Z}:= \argmax_{x \in (\RR^s - B_{\epsilon}(X^*))} (\omega(x) + F(x))$.

In practice, we approximate each function $F^i$, for $i = 1, \ldots n$, by its empirical function $\hat{F}^i$ by taking some point $x \in \RR^s$ to evaluate $F^i(x)$.  In this paper we propose Algorithm \ref{alg1} to heuristically conduct a tropical SVM using empirical functions $\hat{F}^i$, for $i = 1, \ldots n$, using finite set of points $\{x^1, \ldots , x^k\} \subset \RR^s$ with $x^j \not \in B_{\epsilon}(x^i)$ for $j \not = i$ and for $i = 1, \ldots , k$.  Let $\hat{F}^i = (F^i(x^1), \ldots , F^i(x^k))$, for $i = 1, \ldots n$.

\begin{algorithm}
 \caption{Heuristic tropical SVM over $\mathcal{F}$}\label{alg1}
\begin{algorithmic}
\State{Input: A train set $\mathcal{S}=\{(\hat{F}^1, Y^1), \ldots , (\hat{F}^n, Y^n)\}$}

\State{Output: Estimated normal vector $\hat{\omega}$ of a tropical hyperplane}


 \State{Apply $\{(\hat{F}^1, Y^1), \ldots , (\hat{F}^n, Y^n)\}$ to a tropical SVM over $\RR^k / \RR {\bf 1}$}
 
\Return{The output from the tropical SVM over $\RR^k / \RR {\bf 1}$}
\end{algorithmic}
\end{algorithm}

\section{Discussion}\label{sec:dis}

We show the generalization error bounds for tropical SVMs over the tropical projection space $\RR^d / \RR {\bf 1}$ which is isometric to $\RR^{d-1}$.  These bounds still depend on the dimension $d$ and if we fix the sample size $n$, these bounds do not make sense and we cannot extend these bounds for tropical SVMs over a function space $\mathcal{F}$ with tropical metric $d_{\rm tr}$.  For future work it is interesting to obtain generalization error bounds for tropical SVMs over a function space $\mathcal{F}$ with tropical metric $d_{\rm tr}$.
In addition, computational experiments show that tropical SVMs over $\RR^d / \RR {\bf 1}$ have much lower error rates than ones of $L_2$ norm SVMs over $\RR^d$ when we fix the sample size and grow the dimension $d$.  It seems these error rates are bounded by some constant.  We are interested in tighter generalization error bounds for tropical SVMs over the tropical projection space $\RR^d / \RR {\bf 1}$. 

The generalization error bound for tropical SVMs was derived without any assumptions on data distributions.
Specifically, the bound was derived purely combinatorially only based on the shapes of the hyperplanes.
Algorithmic details of tropical SVMs are not yet taken into account.
Thus it is not surprising that the tropical SVMs under the max-plus algebra outperform in the case where a few axes are much more informative than the others.
Thus the evaluation of the tropical SVM in computational experiments gave complementary information.

The max-plus algebra also leads to the anomalous $\log n$ scaling behaviors in the tropical distance $d_{\rm tr}$ between random vectors of length $n$ and its scaling behavior is the direct consequence of the extreme value statistics.
As it is also essential for the theoretical explanation of the robustness of tropical SVMs against the curse of dimensionality, extreme value statistics play the key role  in the computation with the tropical metric.

For computational experiments, we used software developed by \cite{tang}.
\cite{tang} developed heuristic methods for hard margin and soft margin tropical SVMs over $\RR^d / \RR {\bf 1}$ since it is infeasible to compute them in practice.
We conjecture that computing optimal separating tropical hyperplanes for tropical SVMs is NP-hard. However, we still do not know the exact computational time complexity in terms of $n$ and $d$.

\printcredits

\bibliographystyle{cas-model2-names}

\bibliography{document}


\end{document}